\newcolumntype{C}[1]{>{\centering\arraybackslash}m{#1}}
\newcommand{\cocoa}{\textsc{CoCoA}\xspace}
\newcommand{\cocoamtl}{\textsc{Mocha}\xspace}
\newcommand{\eqdef}{:=}
\newcommand{\R}{\mathbb{R}}                      
\newcommand{\Prob}{\mathbb{P}}                   
\newcommand{\E}{\mathbb{E}}                      
\newcommand{\Exp}{\mathbb{E}}
\newcommand{\xv}{ {\bf x}}
\newcommand{\yv}{ {\bf y}}
\newcommand{\uv}{ {\bf u}}
\newcommand{\vv}{ {\bf v}}
\newcommand{\wv}{ {\bf w}}
\newcommand{\alphav}{ {\boldsymbol \alpha}}
\newcommand{\X}{ {\bf X}}
\newcommand{\0}{ {\bf 0}}
\newcommand{\Wv}{ {\bf W}}
\newcommand{\Ov}{ {\bf \Omega}}
\newcommand{\Xv} {{\bf X}}
\newcommand{\1} {{\bf 1}}
\newcommand{\Iv}{{\bf I}}
\newcommand{\Mv}{{\bf M}}
\newcommand{\aggpar}{\gamma}
\newcommand{\vc}[2]{#1^{(#2)}}
\newcommand{\trace}[1]{\text{tr}{\left( {#1} \right)}}
\newcommand{\bP}{\mathcal{P}}
\newcommand{\bD}{\mathcal{D}}
\newcommand{\bH}{\mathcal{H}}
\newcommand{\bR}{\mathcal{R}}
\newcommand{\Ggk}{\mathcal{G}^{\sigma'}_t\hspace{-0.08em}}
\newtheorem*{rep@theorem}{\rep@title}
\newcommand{\newreptheorem}[2]{
\newenvironment{rep#1}[1]{
 \def\rep@title{#2 \ref{##1}}
 \begin{rep@theorem}}
 {\end{rep@theorem}}}
\theoremstyle{plain}
\newtheorem{theorem}{Theorem}
\newtheorem{lemma}[theorem]{Lemma}
\newtheorem{assumption}{Assumption}
\newtheorem{remark}{Remark}
\newtheorem{corollary}[theorem]{Corollary}
\theoremstyle{definition}
\newtheorem{definition}{Definition}
\newcommand{\unaryminus}{\scalebox{0.85}[0.85]{\( - \)}}
\newcommand{\unaryplus}{\scalebox{0.85}[0.85]{\( + \)}}
    \newcommand*{\algrule}[1][\algorithmicindent]{\makebox[#1][l]{\hspace*{.5em}\thealgruleextra\vrule height \thealgruleheight depth \thealgruledepth}}%
\newcommand*{\thealgruleextra}{}
\newcommand*{\thealgruleheight}{.75\baselineskip}
\newcommand*{\thealgruledepth}{.25\baselineskip}
\def\ALG@printindent{%
    \ifnum \theALG@nested>0%
        \ifx\ALG@text\ALG@x@notext%
        \else
            \unskip
            \addvspace{-1pt}%
            \ALG@printindent@tempcnta=1
            \loop
                \algrule[\csname ALG@ind@\the\ALG@printindent@tempcnta\endcsname]%
                \advance \ALG@printindent@tempcnta 1
            \ifnum \ALG@printindent@tempcnta<\numexpr\theALG@nested+1\relax%
            \repeat
        \fi
    \fi
    }%
\patchcmd{\ALG@doentity}{\noindent\hskip\ALG@tlm}{\ALG@printindent}{}{\errmessage{failed to patch}}
\newbox\statebox
\newcommand{\myState}[1]{%
    \setbox\statebox=\vbox{#1}%
    \edef\thealgruleheight{\dimexpr \the\ht\statebox+1pt\relax}%
    \edef\thealgruledepth{\dimexpr \the\dp\statebox+1pt\relax}%
    \ifdim\thealgruleheight<.75\baselineskip
        \def\thealgruleheight{\dimexpr .75\baselineskip+1pt\relax}%
    \fi
    \ifdim\thealgruledepth<.25\baselineskip
        \def\thealgruledepth{\dimexpr .25\baselineskip+1pt\relax}%
    \fi
    \State #1%
    \def\thealgruleheight{\dimexpr .75\baselineskip+1pt\relax}%
    \def\thealgruledepth{\dimexpr .25\baselineskip+1pt\relax}%
}
\title{Federated Multi-Task Learning}
\author{Virginia Smith \\
Stanford \\
\texttt{\footnotesize{smithv@stanford.edu}} \hspace{-1em}
\And
Chao-Kai Chiang\thanks{Authors contributed equally.} \\
USC \\
\texttt{\footnotesize{chaokaic@usc.edu}} \hspace{-1em}
\And
Maziar Sanjabi$^*$ \\
USC \\
\texttt{\footnotesize{maziarsanjabi@gmail.com}} \hspace{-1em}
\And
Ameet Talwalkar \\
CMU \\
\texttt{\footnotesize{talwalkar@cmu.edu}}}
\date{}
\begin{document}

\maketitle

\begin{abstract}
Federated learning poses new statistical and systems challenges in training machine learning models over distributed networks of devices. In this work, we show that multi-task learning is naturally suited to handle the statistical challenges of this setting, and propose a novel systems-aware optimization method, \cocoamtl, that is robust to practical systems issues. Our method and theory for the first time consider issues of high communication cost, stragglers, and fault tolerance for distributed multi-task learning. The resulting method achieves significant speedups compared to alternatives in the federated setting, as we demonstrate through  simulations on real-world federated datasets.

\end{abstract}

\section{Introduction}
\label{sec:intro}

Mobile phones, wearable devices, and smart homes are just a few of the modern distributed networks generating massive amounts of data each day. Due to the growing storage and computational power of devices in these networks, it is increasingly attractive to store data locally and push more network computation to the edge. The nascent field of \textit{federated learning} explores \emph{training} statistical models directly on devices~\cite{McMahan:2017fl}. Examples of potential applications include: learning sentiment, semantic location, or activities of mobile phone users; predicting health events like low blood sugar or heart attack risk from wearable devices; or detecting burglaries within smart homes~\cite{Anguita:2013ap, pantelopoulos2010survey, rashidi2009keeping}. Following~\cite{Konecny:2015fo, Mcmahan:2017ce, Konecny:2016fl}, we summarize the unique challenges of federated learning below.

\begin{enumerate}[leftmargin=*]
\item \textbf{Statistical Challenges}: The aim in federated learning is to fit a model to data, $\{\Xv_1, \dots, \Xv_m\}$, generated by $m$ distributed nodes.  Each node, $t \in [m]$, collects data in a \textit{non-IID} manner across the network, with data on each node being generated by a distinct distribution $\Xv_t \sim P_t$. The number of data points on each node, $n_t$, may also vary significantly, and there may be an underlying structure present that captures the relationship amongst nodes and their associated distributions.

\item \textbf{Systems Challenges}: There are typically a large number of nodes, $m$, in the network, and communication is often a significant bottleneck. Additionally, the storage, computational, and communication capacities of each node may differ due to variability in hardware (CPU, memory), network connection (3G, 4G, WiFi), and power (battery level). These systems challenges, compounded with unbalanced data and statistical heterogeneity, make issues such as stragglers and fault tolerance significantly more prevalent than in typical data center environments. 

\end{enumerate}

In this work, we propose a modeling approach that differs significantly from prior work on federated learning, where the aim thus far has been to train a single global model across the network~\cite{Konecny:2015fo, Mcmahan:2017ce, Konecny:2016fl}. Instead, we address statistical challenges in the federated setting by learning separate models for each node, \{$\wv_1, \dots, \wv_m$\}. This can be naturally captured through a \textit{multi-task learning (MTL)} framework, where the goal is to consider fitting separate but related models simultaneously~\cite{Evgeniou:2004rm, Ando:2005af, Zhang:2010ac, Kumar:2012lt}. Unfortunately, current multi-task learning methods are not suited to handle the systems challenges that arise in federated learning, including high communication cost, stragglers, and fault tolerance. Addressing these challenges is therefore a key component of our work.

\subsection{Contributions}

We make the following contributions.  First, we show that MTL is a natural choice to handle statistical challenges in the federated setting. Second, we develop a novel method, \cocoamtl, to solve a general MTL problem. Our method generalizes the distributed optimization method \cocoa~\cite{Jaggi:2014cd, Ma:2015ti} in order to address systems challenges associated with network size and node heterogeneity.  Third, we provide convergence guarantees for \cocoamtl that carefully consider these unique systems challenges and provide insight into practical performance. Finally, we demonstrate the superior empirical performance of \cocoamtl with a new benchmarking suite of federated datasets.

\section{Related Work}
\label{sec:relatedwork}

\paragraph*{Learning Beyond the Data Center.} Computing SQL-like queries across distributed, low-powered nodes is a decades-long area of research that has been explored under the purview of query processing in sensor networks, computing at the edge, and fog computing \cite{Madden:2002ta, Deshpande:2005ma, Madden:2005ta, Bonomi:2012fc, Hong:2013mf, Garcia:2015ec}. Recent works have also considered training machine learning models centrally but serving and storing them locally, e.g., this is a common approach in mobile user modeling and personalization~\cite{Kuflik:2012ca,Rastegari2016,Ravi:2017}. However, as the computational power of the nodes within distributed networks grows, it is possible to do even more work locally, which has led to recent interest in federated learning.\footnote{The term \textit{on-device learning} has been used to describe both the task of model training and of model serving. Due to the ambiguity of this phrase, we exclusively use the term federated learning.} In contrast to our proposed approach, existing federated learning approaches~\cite{Konecny:2015fo, Mcmahan:2017ce, Konecny:2016fl, McMahan:2017fl} aim to learn a single global model across the data.\footnote{While not the focus of our work, we note privacy is an important concern in the federated setting, and that the privacy benefits associated with global federated learning (as discussed in~\cite{Mcmahan:2017ce}) also apply to our approach.} This limits their ability to deal with non-IID data and structure amongst the nodes. These works also come without convergence guarantees, and have not addressed practical issues of stragglers or fault tolerance, which are important characteristics of the federated setting. The work proposed here is, to the best of our knowledge, the first federated learning framework to consider these challenges, theoretically and in practice.

\paragraph*{Multi-Task Learning.}
In {multi-task learning}, the goal is to learn models for multiple related tasks simultaneously. While the MTL literature is extensive, most MTL modeling approaches can be broadly categorized into two groups based on how they capture relationships amongst tasks.
The first (e.g., \cite{Evgeniou:2004rm, argyriou2007MTLfeature, chen2011LowRsparse, kim2009MTLgraph}) assumes that a clustered, sparse, or low-rank structure between the tasks is known \textit{a priori}. A second group instead assumes that the task relationships are not known beforehand and can be learned directly from the data~(e.g., \cite{jacob2009clustered,Zhang:2010ac,gonccalves2016MTLcopula}). In this work, we focus our attention on this latter group, as task relationships may not be known beforehand in real-world settings. 
In comparison to learning a single global model, these MTL approaches can directly capture relationships amongst non-IID and unbalanced data, which makes them particularly well-suited for the statistical challenges of federated learning. We demonstrate this empirically on real-world federated datasets in Section~\ref{sec:experiments}.
However, although MTL is a natural modeling choice to address the statistical challenges of federated learning, currently proposed methods for distributed MTL (discussed below) do not adequately address the  systems challenges associated with federated learning.

\paragraph*{Distributed Multi-Task Learning.} Distributed multi-task learning is a relatively new area of research, in which the aim is to solve an MTL problem when data for each task is distributed over a network. While several recent works~\cite{Ahmed:2014sh, Mateos:2015do, Wang:2016da, Wang:2016db} have considered the issue of distributed MTL training, the proposed methods do not allow for flexibility of communication versus computation.  As a result, they are unable to efficiently handle concerns of fault tolerance and stragglers, the latter of which stems from both data and system heterogeneity. The works of \cite{Jin:2015cb} and \cite{Baytas:2016:am} allow for asynchronous updates to help mitigate stragglers, but do not address fault tolerance. Moreover,~\cite{Jin:2015cb} provides no convergence guarantees, and the convergence of~\cite{Baytas:2016:am} relies on a bounded delay assumption that is impractical for the federated setting, where delays may be significant and devices may drop out completely. Finally, \cite{Liu:2017dm} proposes a method and setup leveraging the distributed framework \cocoa~\cite{Jaggi:2014cd, Ma:2015ti}, which we show in Section~\ref{sec:convergence} to be a special case of the more general approach in this work. However, the authors in~\cite{Liu:2017dm} do not explore the federated setting, and their assumption that the same amount of work is done locally on each node is prohibitive in federated settings, where unbalance is common due to data and system variability. \vspace{.5em}

\section{Federated Multi-Task Learning}
\label{sec:method}

In federated learning, the aim is to learn a model over data that resides on, and has been generated by, $m$ distributed nodes.  As a running example, consider learning the activities of mobile phone users in a cell network based on their individual sensor, text, or image data. Each node (phone), $t \in [m]$, may generate data via a distinct distribution, and so it is natural to fit separate models, $\{ \wv_1, \dots, \wv_m \}$, to the distributed data---one for each local dataset. However, structure between models frequently exists (e.g., people may behave similarly when using their phones), and modeling these relationships via \textit{multi-task learning} is a natural strategy to improve performance and boost the effective sample size for each node~\cite{caruana:1998ml,Ando:2005af,Argyriou:2008cm}. In this section, we suggest a general MTL framework for the federated setting, and propose a novel method, \cocoamtl, to handle the systems challenges of federated MTL.

\subsection{General Multi-Task Learning Setup}
\label{sec:setup}
Given data $\Xv_t \in \R^{d \times n_t}$ from $m$ nodes, multi-task learning fits separate weight vectors $\wv_t \in \R^d$ to the data for each task (node) through arbitrary convex loss functions $\ell_t$ (e.g., the hinge loss for SVM models). Many MTL problems can be captured via the following general formulation:

\vspace{-2mm}
\begin{equation}
\label{eq:obj}
\min_{\Wv, \Ov} \, \left\{ \sum_{t=1}^m \sum_{i=1}^{n_t} \ell_t(\wv_t^T \xv_t^i, y_t^i) +  \bR(\Wv, \Ov) \right\} \, ,
\end{equation}
\vspace{-.5mm}

where $\Wv := [\wv_1, \dots, \wv_m] \in\R^{d\times m}$ is a matrix whose $t$-th column is the weight vector for the $t$-th task. The matrix $\Ov \in \R^{m\times m}$ models relationships amongst tasks, and is either known a priori or estimated while simultaneously learning task models. MTL problems differ based on their assumptions on $\bR$, which takes $\Ov$ as input and promotes some suitable structure amongst the tasks.

As an example, several popular MTL approaches assume that tasks form clusters based on whether or not they are related~\cite{Evgeniou:2004rm, jacob2009clustered, Zhang:2010ac, zhou2011clustered}. This can be expressed via the following bi-convex formulation:
\begin{equation}
\label{eq:fixed_cluster}
\bR(\Wv,\Ov)  = \lambda_1~\trace{\Wv\Ov\Wv^T} + \lambda_2\|\Wv\|_F^2,
\end{equation}
with constants $\lambda_1, \lambda_2>0$, and where the second term performs $L_2$ regularization on each local model. We use a similar formulation~\eqref{eq:prob} in our experiments in Section~\ref{sec:experiments}, and provide details on other common classes of MTL models that can be formulated via \eqref{eq:obj} in Appendix~\ref{sec:MTLapp}.

\subsection{\cocoamtl: A Framework for Federated Multi-Task Learning}
\label{sec:cocoamtl}

In the federated setting, the aim is to train statistical models directly on the edge, and thus we solve~\eqref{eq:obj} while assuming that the data $\{\Xv_1, \dots, \Xv_m\}$ is distributed across $m$ nodes or devices. 
Before proposing our federated method for solving \eqref{eq:obj}, we make the following observations:
\emph{
\begin{itemize}[leftmargin=*]
\item \textbf{Observation 1}: In general, \eqref{eq:obj} is not jointly convex in $\Wv$ and $\Ov$, and even in the cases where \eqref{eq:obj} is convex, solving for $\Wv$ and $\Ov$ simultaneously can be difficult~\cite{Argyriou:2008cm}. 
\item \textbf{Observation 2}: When fixing $\Ov$, updating $\Wv$ depends on both the data $\Xv$, which is distributed across the nodes, and the structure $\Ov$, which is known centrally.
\item \textbf{Observation 3}: When fixing $\Wv$, optimizing for $\Ov$ only depends on $\Wv$ and not on the data $\Xv$.
\end{itemize}
}
Based on these observations, it is natural to propose an alternating optimization approach to solve problem \eqref{eq:obj}, in which at each iteration we fix either $\Wv$ or $\Ov$ and optimize over the other, alternating until convergence is reached. Note that solving for $\Ov$ is not dependent on the data and therefore can be computed centrally; as such, we defer to prior work for this step~\cite{zhou2011clustered, jacob2009clustered,Zhang:2010ac,gonccalves2016MTLcopula}. In Appendix \ref{sec:MTLapp}, we discuss updates to $\Omega$ for several common MTL models.

In this work, we focus on developing an efficient distributed optimization method for the $\Wv$ step. In traditional data center environments, the task of distributed training is a well-studied problem, and various communication-efficient frameworks have been recently proposed, including  the state-of-the-art primal-dual \cocoa framework~\cite{Jaggi:2014cd, Ma:2015ti}.  Although \cocoa can be extended directly to update $\Wv$ in a distributed fashion across the nodes, it cannot handle the unique systems challenges of the federated environment, such as stragglers and fault tolerance, as discussed in Section~\ref{ssec:syschal}. To this end, we extend \cocoa and propose a new method, \cocoamtl, for federated multi-task learning. Our method is given in Algorithm~\ref{alg:mocha} and described in detail in Sections~\ref{sec:dualProblem} and \ref{ssec:syschal}.

\setlength{\textfloatsep}{13pt}

\begin{algorithm}[t]
\caption{\cocoamtl: Federated Multi-Task Learning Framework}
\label{alg:mocha}
\begin{algorithmic}[1]
\State {\bf Input:} Data $\Xv_t$ from $t=1, \dots, m$ tasks, stored on one of $m$ nodes, and initial matrix $\Ov_0$
\myState Starting point $\vc{\alphav}{0} := \0 \in \R^n$, $\vc{\vv}{0}:=\0\in \R^b$
\For{{\bf iterations} $i=0,1,\ldots$} \vspace{.2em}

\myState Set subproblem parameter $\sigma'$ and number of federated iterations, $H_i$

\For {{\bf iterations} $h = 0,1, \cdots, H_i $} 
 
  \For {{\bf tasks} $t \in \{1,2,\dots, m\}$ {\bf in parallel over $m$ nodes}}
     \myState call local solver, returning $\theta_t^h$-approximate solution
     $\Delta\alphav_t$
        of  the local subproblem~\eqref{eq:subproblem}
     \myState update local variables $\alphav_t \leftarrow {\alphav}_{t}+ {\Delta \alphav}_t$
     \myState return updates $\Delta \vv_t :=  \Xv_t \Delta\alphav_t$
  \EndFor \vspace{.1em}
  \myState {\textbf{reduce:}} $\vv_t  \leftarrow \vv_t +
  \textstyle \Delta \vv_t $ 
  
\EndFor \vspace{.1em}

\myState Update $\Ov$ centrally based on $\wv(\alphav)$ for latest $\alphav$

\EndFor
\State Central node computes $\wv=\wv(\alphav)$ based on the lastest $\alphav$
\State \bf{return:} $\Wv := [\wv_1, \dots, \wv_m]$
\end{algorithmic}
\end{algorithm}

\subsection{Federated Update of $\Wv$}
\label{sec:dualProblem}
To update $\Wv$ in the federated setting, we begin by extending works on distributed primal-dual optimization~\cite{Jaggi:2014cd, Ma:2015ti, Liu:2017dm} to apply to the generalized multi-task framework~\eqref{eq:obj}. This involves deriving the appropriate dual formulation, subproblems, and problem parameters, as we detail below.

\vspace{-.5em}

\paragraph{Dual problem.}

Considering the dual formulation of~\eqref{eq:obj} will allow us to better separate the global problem into distributed subproblems for federated computation across the nodes. Let $n \eqdef \sum_{t=1}^m n_t$ and $\Xv \eqdef \mbox{Diag}(\Xv_1,\cdots,\Xv_m)\in\mathbb{R}^{md\times n}$. With $\Ov$ fixed, the dual of problem~\eqref{eq:obj}, defined with respect to dual variables $\alphav \in \R^n$, is given by:
\begin{equation}
\label{eq:objDual}
\min_{ \alphav} \, \left\{ \bD(\alphav) \eqdef \sum_{t=1}^m \sum_{i=1}^{n_t} \ell_t^*(-\alphav_t^i) +  \bR^*(\X\alphav) \right\} \, ,
\end{equation}
where $\ell_t^*$ and $\bR^*$ are the conjugate dual functions of $\ell_t$ and $\bR$, respectively, and $\alphav_t^i$ is the dual variable for the data point $(\xv_t^i,y_t^i)$.
Note that $\bR^*$ depends on $\Ov$, but for the sake of simplicity, we have removed this in our notation. To derive distributed subproblems from this global dual, we make an assumption described below on the regularizer $\bR$.

\vspace{1mm}
\begin{assumption}
\label{asm:strong}
Given $\Ov$, we assume that there exists a symmetric positive definite matrix $\Mv\in\mathbb{R}^{md\times md}$, depending on $\Ov$, for which the function $\bR$ is strongly convex with respect to $\Mv^{-1}$. Note that this corresponds to assuming that $\bR^*$ will be smooth with respect to matrix $\Mv$. 
\end{assumption}
\vspace{1mm}

\begin{remark}
\label{rmk:vecrep}
We can reformulate the MTL regularizer in the form of $\bar{\bR}(\wv,\bar{\Ov}) = \bR(\Wv,\Ov)$, where $\wv\in\mathbb{R}^{md}$ is a vector containing the columns of $\Wv$ and $\bar{\Ov} \eqdef \Ov \otimes \mathbf{I}_{d\times d} \in \mathbb{R}^{md\times md}$. For example, we can rewrite the regularizer in \eqref{eq:fixed_cluster} as $\bar{\bR}(\wv,\bar{\Ov}) = \trace{\wv^T(\lambda_1\bar{\Ov}+\lambda_2\Iv)\wv}$. Writing the regularizer in this form, it is clear that it is strongly convex with respect to matrix $\Mv^{-1} = \lambda_1\bar{\Ov}+\lambda_2\Iv$.
\end{remark}

\paragraph{Data-local quadratic subproblems.} To solve~\eqref{eq:obj} across distributed nodes, we define the following data-local subproblems, which are formed via a careful quadratic approximation of the dual problem \eqref{eq:objDual} to separate computation across the nodes. These subproblems find updates $\Delta\alphav_t\in\mathbb{R}^{n_t}$ to the dual variables in $\alphav$ corresponding to a single node $t$, and only require accessing data which is available locally, i.e., $\X_t$ for node $t$.  The $t$-th subproblem is given by:
\begin{align}
\min_{\Delta \alphav_t } \
\Ggk( \Delta \alphav_t; \vv_t, \alphav_t):= \sum_{i =1}^{n_t} \ell^*_t(\unaryminus\alphav_t^i \unaryminus\Delta\alphav_t^i)
\unaryplus \langle \wv_t(\alphav), \X_t\Delta\alphav_t\rangle
\unaryplus \frac{\sigma'}{2}  \left\| \X_t\Delta\alphav_t \right\|_{\Mv_t}^2\unaryplus c(\alphav) \,,
\label{eq:subproblem}
\end{align}
where $c(\alphav) := \frac{1}{m} \bR^*(\Xv\alphav)$, and $\Mv_t\in\mathbb{R}^{d\times d}$ is the $t$-th diagonal block of the symmetric positive definite matrix $\Mv$. Given dual variables $\alphav$,  corresponding primal variables can be found via $\wv(\alphav) = \nabla\bR^*(\Xv\alphav)$, where $\wv_t(\alphav)$ is the $t$-th block in the vector $\wv(\alphav)$. Note that computing $\wv(\alphav)$ requires the vector $\vv = \Xv\alphav$. The $t$-th block of $\vv$, $\vv_t \in \R^d$, is the only information that must be communicated between nodes at each iteration. 
Finally, $\sigma'>0$ measures the difficulty of the data partitioning, and helps to relate progress made to the subproblems to the global dual problem. It can be easily selected based on $\Mv$ for many applications of interest; we provide details in Lemma \ref{lemma:sigma} of the Appendix.

\subsection{Practical Considerations}
\label{ssec:syschal}

During \cocoamtl's federated update of $\Wv$, the central node requires a response from all workers before
performing a synchronous update. In the federated setting, a naive execution of this communication protocol could introduce dramatic straggler effects  due to node heterogeneity. To avoid stragglers,  \cocoamtl provides the $t$-th node with the flexibility to \textit{approximately solve} its subproblem $\Ggk(\cdot)$, where the quality of the approximation is controled by a per-node parameter $\theta_t^h$.  The following factors determine the quality of the $t$-th node's solution to its subproblem:
\begin{enumerate}[leftmargin=*]
\item \textbf{Statistical challenges}, such as the size of $\Xv_t$ and the intrinsic difficulty of subproblem $\Ggk(\cdot)$.
\item \textbf{Systems challenges}, such as the node's storage, computational, and  communication capacities due to hardware (CPU, memory), network connection (3G, 4G, WiFi), and power (battery level).
\item A \textbf{global clock cycle} imposed by the central node specifying a deadline for receiving updates.
\end{enumerate}
We define $\theta_t^h$ as a function of these factors, and assume that each node has a controller that may derive $\theta_t^h$ from the current clock cycle and statistical/systems setting.  $\theta_t^h$ ranges from zero to one, where $\theta_t^h=0$ indicates an exact solution to $\Ggk(\cdot)$ and $\theta_t^h=1$ indicates that node $t$ made no progress during iteration $h$ (which we refer to as a \emph{dropped node}). For instance, a node may `drop' if it runs out of battery, or if its network bandwidth deteriorates during iteration $h$ and it is thus unable to return its update within the current clock cycle. A formal definition of $\theta_t^h$ is provided in~\eqref{eq:localSolutionQuality2a} of Section~\ref{sec:convergence}.

\cocoamtl mitigates stragglers by enabling the $t$-th node to define its own $\theta_t^h$. On every iteration $h$, the local updates that a node performs and sends in a clock cycle will yield a specific value for $\theta_t^h$.  As discussed in Section~\ref{sec:convergence}, \cocoamtl is additionally robust to a small fraction of nodes periodically dropping and performing no local updates (i.e., $\theta_t^h := 1$) under suitable conditions, as defined in Assumption~\ref{asm:prob_stragglers}. In contrast, prior work of \cocoa may suffer from severe straggler effects in federated settings, as it requires a \emph{fixed $\theta_t^h=\theta$ across all  nodes and all iterations} while still maintaining synchronous updates, and it does not allow for the case of dropped nodes ($\theta := 1$).

Finally, we note that asynchronous updating schemes are an alternative approach to mitigate stragglers.  We do not consider these approaches in this work, in part due to the fact that the bounded-delay assumptions associated with most asynchronous schemes limit fault tolerance.  However, it would be interesting to further explore the differences and connections between asynchronous methods and approximation-based, synchronous methods like \cocoamtl in future work.

\section{Convergence Analysis}
\label{sec:convergence}
\cocoamtl is based on a bi-convex alternating approach, which is guaranteed to converge \cite{gorski2007biconvex,razaviyayn2013BSUM} to a stationary solution of problem \eqref{eq:obj}. In the case where this problem is jointly convex with respect to $\Wv$ and $\Ov$, such a solution is also optimal. In the rest of this section, we therefore focus on the convergence of solving the $\Wv$ update of \cocoamtl in the federated setting. Following the discussion in Section~\ref{ssec:syschal}, we first introduce the following per-node, per-round approximation parameter.

\begin{definition}[Per-Node-Per-Iteration-Approximation Parameter]
\label{def:approx}

At each iteration $h$, we define the accuracy level of the solution calculated by node $t$ to its subproblem~\eqref{eq:subproblem} as:
\begin{align}
\label{eq:localSolutionQuality2a}
\theta_t^h
\eqdef
\frac{ \Ggk( \Delta \alphav_t^{(h)}; \vv^{(h)}, \alphav_t^{(h)} ) - \Ggk( \Delta \alphav_t^\star; \vv^{(h)}, \alphav_t^{(h)} ) }{ \Ggk( {\bf 0}; \vv^{(h)}, \alphav_t^{(h)} ) - \Ggk( \Delta \alphav_t^\star; \vv^{(h)}, \alphav_t^{(h)} ) } \, ,
\end{align}
where
$\Delta \alphav_t^\star$ is the minimizer of subproblem $\Ggk(\cdot \ ; \vv^{(h)}, \alphav_t^{(h)} )$.
We allow this value to vary between $[0,1]$, with $\theta_t^h := 1$ meaning that no updates to subproblem $\Ggk$ are made by node $t$ at iteration $h$. 
\end{definition}

While the flexible per-node, per-iteration approximation parameter $\theta_t^h$ in \eqref{eq:localSolutionQuality2a} allows the consideration of stragglers and fault tolerance, these additional degrees of freedom also pose new challenges in providing convergence guarantees for \cocoamtl. We introduce the following assumption on $\theta_t^h$ to provide our convergence guarantees. 

\begin{assumption}
\label{asm:prob_stragglers}

Let $\bH_h \eqdef (\vc{\alphav}{h}, \vc{\alphav}{h-1}, \cdots, \vc{\alphav}{1})$ be the dual vector history until the beginning of iteration $h$, and define $\Theta_t^h \eqdef \E [ \theta_t^h | \bH_h]$. For all tasks $t$ and all iterations $h$, we assume $p_t^{h} \eqdef \Prob[\theta^h_t = 1] \leq p_{\max}<1$ and $\hat{\Theta}_t^h \eqdef \mathbb{E}[\theta_t^h|\mathcal{H}_h, \theta_t^h<1]\leq\Theta_{\max}<1$.

\end{assumption}
This assumption  states that at each iteration, the \textit{probability} of a node sending a result is non-zero, and that the quality of the returned result is, on average, better than the previous iterate. Compared to~\cite{Smith16,Liu:2017dm} which assumes $\theta_t^h = \theta<1$, our assumption is significantly less restrictive and better models the federated setting, where nodes are unreliable and may periodically drop out.

Using Assumption~\ref{asm:prob_stragglers}, we derive the following theorem, which characterizes the convergence of the federated update of \cocoamtl in finite horizon when the losses $\ell_t$ in \eqref{eq:obj} are smooth.

\begin{theorem}

\label{thm:convergenceSmoothCasePart1}

Assume that the losses $\ell_t$ are $(1/\mu)$-smooth. Then, under Assumptions \ref{asm:strong} and \ref{asm:prob_stragglers}, there exists a constant $s\in(0,1]$ such that for any given convergence target $\epsilon_{\bD}$, choosing $H$ such that
\begin{equation}
H
\geq
\frac{1}{(1-\bar\Theta) s}\log \frac{n}{\epsilon_{\bD}} \, ,
\end{equation}
will satisfy~$\Exp[\bD(\vc{\alphav}{H})-\bD(\alphav^{\star})]\leq\epsilon_{\bD}$ .

\end{theorem}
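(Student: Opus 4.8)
The plan is to prove a one-step contraction of the expected dual suboptimality and then unroll it $H$ times. Set $\epsilon_{\bD}^{(h)} \eqdef \Exp[\bD(\vc{\alphav}{h}) - \bD(\alphav^\star)]$ and $\bar\Theta \eqdef 1 - (1-p_{\max})(1-\Theta_{\max})$, which is strictly less than $1$ by Assumption~\ref{asm:prob_stragglers}. The goal is to show that there is a constant $s\in(0,1]$, depending on $\mu$, $\sigma'$, and the data (through $\max_t \|\X_t\|_{\Mv_t}$), such that
\begin{equation*}
\epsilon_{\bD}^{(h+1)} \;\leq\; \bigl(1 - (1-\bar\Theta)\,s\bigr)\,\epsilon_{\bD}^{(h)} \,.
\end{equation*}
Given this recursion, together with the standard crude bound $\bD(\0) - \bD(\alphav^\star) \leq n$ that holds after the usual normalization of the loss functions (hence the $\log(n/\epsilon_{\bD})$ in the statement), unrolling from $\vc{\alphav}{0}=\0$ and using $1-x\leq e^{-x}$ gives $\epsilon_{\bD}^{(H)} \leq e^{-(1-\bar\Theta)sH}\,n$, which is at most $\epsilon_{\bD}$ as soon as $H \geq \tfrac{1}{(1-\bar\Theta)s}\log\tfrac{n}{\epsilon_{\bD}}$, exactly the claimed bound.

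For the one-step contraction I would proceed as in the \cocoap analysis, adapted to the generalized regularizer. First, with $\sigma'$ chosen as in Lemma~\ref{lemma:sigma} and the "adding" aggregation used by Algorithm~\ref{alg:mocha}, the sum of the local models is a safe upper bound on the true dual after the aggregated update: $\bD\bigl(\vc{\alphav}{h} + \sum_t \Delta\alphav_t\bigr) \leq \sum_t \Ggk(\Delta\alphav_t; \vv_t, \vc{\alphav_t}{h}) + \text{const}$, where Assumption~\ref{asm:strong} (smoothness of $\bR^*$ with respect to $\Mv$) is what controls the cross terms coming from the coupling through $\bR^*$. Next, Definition~\ref{def:approx} gives, for each node, $\Ggk(\Delta\alphav_t) - \Ggk(\Delta\alphav_t^\star) \leq \theta_t^h\,[\Ggk(\0) - \Ggk(\Delta\alphav_t^\star)]$, so the decrease of $\Ggk$ relative to $\0$ is at least $(1-\theta_t^h)\,[\Ggk(\0) - \Ggk(\Delta\alphav_t^\star)]$; summing over tasks, the dual decrease at iteration $h$ is at least $\sum_t (1-\theta_t^h)\,[\Ggk(\0) - \Ggk(\Delta\alphav_t^\star)]$, where dropped nodes ($\theta_t^h=1$) simply contribute zero. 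Taking $\Exp[\cdot\mid\bH_h]$ and using $\Exp[1-\theta_t^h\mid\bH_h] = (1-p_t^h)(1-\hat\Theta_t^h) \geq (1-p_{\max})(1-\Theta_{\max}) = 1-\bar\Theta$ for every $t$ (only linearity of expectation is needed, no independence across tasks), the conditional expected decrease is at least $(1-\bar\Theta)\sum_t [\Ggk(\0) - \Ggk(\Delta\alphav_t^\star)]$. Finally I would lower-bound each term by plugging the feasible direction $\Delta\alphav_t = s(\alphav_t^\star - \vc{\alphav_t}{h})$ into the quadratic $\Ggk$, expand, and use that $\ell_t$ being $(1/\mu)$-smooth makes $\ell_t^*$ $\mu$-strongly convex; this produces a term linear in $s$ that sums across tasks to (at least) $s\,(\bD(\vc{\alphav}{h}) - \bD(\alphav^\star))$, minus an $O(s^2)$ residual that in the smooth regime is itself reabsorbed into the same suboptimality, which pins down a valid $s\in(0,1]$. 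Taking total expectation yields the recursion above.

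The main obstacle I expect is the careful bookkeeping in this one-step lemma: making the safe-upper-bound argument go through for the general MTL regularizer $\bR$ with the block matrix $\Mv$ rather than the plain $L_2$ case, and choosing the reference direction and the constant $s$ so that the residual quadratic term is genuinely dominated by the linear gap term in the smooth-loss regime. A secondary point requiring care is the conditioning on the history $\bH_h$ — one must verify that $\Delta\alphav_t^\star$, the "safeness" constant, and the reference point $\alphav_t^\star$ are all $\bH_h$-measurable (or handled as fixed given $\bH_h$) so that the per-iteration conditional bounds chain legitimately into a bound on $\epsilon_{\bD}^{(H)}$ via the tower property.
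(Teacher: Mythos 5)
Your proposal is correct and follows essentially the same path as the paper's proof: the safe upper bound of $\bD$ by the sum of local subproblems via $\sigma'$ and the smoothness of $\bR^*$ (the paper's Lemma~\ref{lem:RelationOfDTOSubproblems}), the per-node computation $\Exp[1-\theta_t^h\mid\bH_h]=(1-p_t^h)(1-\hat\Theta_t^h)\geq 1-\bar\Theta$ (Lemma~\ref{lem:ThetaBound}), a reference-direction step exploiting $\mu$-strong convexity of $\ell_t^*$ to fix $s=\mu/(\mu+\sigma_{\max}\sigma')$, and the geometric unrolling against $\bD(\alphav^{(0)})-\bD(\alphav^{\star})\leq n$. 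The only real difference is your reference direction $s(\alphav^{\star}-\alphav^{(h)})$, which contracts the dual suboptimality directly, whereas the paper plugs in $s(\uv-\alphav^{(h)})$ with $\uv_t^i\in\partial\ell_t(\wv_t(\alphav)^\top\xv_t^i,y_t^i)$ to lower-bound the per-round decrease by the duality gap $G(\alphav^{(h)})$ and then applies weak duality --- a route that costs nothing extra and additionally yields the duality-gap guarantee the paper notes after the theorem.
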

Here, $\bar{\Theta}\eqdef p_{\max} + (1-p_{\max})\Theta_{\max}<1$. While Theorem \ref{thm:convergenceSmoothCasePart1} is concerned with finite horizon convergence, it is possible to get asymptotic convergence results, i.e., $H\rightarrow\infty$, with milder assumptions on the stragglers; see Corollary \ref{coro:asym} in the Appendix for details.

When the loss functions are non-smooth, e.g., the hinge loss for SVM models, we provide the following sub-linear convergence for $L$-Lipschitz losses.
\begin{theorem}
\label{thm:LipschitzLosses}
If the loss functions $\ell_t$ are $L$-Lipschitz, then there exists a constant $\sigma$, defined in \eqref{eq:sigma}, such that for any given $\epsilon_{\bD}>0$, if we choose
\begin{align}
\label{eq:sublinear}
& H \geq H_0 + \bigg\lceil \frac{2}{(1-\bar{\Theta})}\max\left(1, \frac{2L^2\sigma\sigma'}{n^2\epsilon_{\bD}}\right)\bigg\rceil, \\
 \text{with } H_0 \geq \bigg\lceil h_0 + & \frac{16L^2\sigma\sigma'}{(1-\bar{\Theta})n^2\epsilon_{\bD}}\bigg\rceil, h_0 = \left[1+\frac{1}{(1-\bar{\Theta})}\log\left(\frac{2n^2(D(\alphav^{\star})-D(\alphav^{0}))}{4L^2\sigma\sigma'}\right)\right]_{+}\nonumber ,
\end{align}
then $\bar{\alphav}\eqdef \frac{1}{H-H0}\sum_{h=H_0+1}^H\alphav^{(h)}$ will satisfy $\mathbb{E}[\bD(\bar{\alphav})-\bD(\alphav^{\star})]\leq \epsilon_{\bD}$ .

\end{theorem}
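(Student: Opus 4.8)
The plan is to run the standard gap-based, two-stage (a warm-up followed by tail averaging) analysis of distributed primal--dual methods for non-smooth losses, but with the one-step progress bound engineered so that the random straggler/dropout behavior enters only through the single number $\bar\Theta$. The first step is to eliminate the randomness by conditioning on the dual history $\bH_h$: Assumption~\ref{asm:prob_stragglers} gives $\E[\theta_t^h\mid\bH_h]=p_t^h+(1-p_t^h)\hat{\Theta}_t^h$, and since $p\mapsto p+(1-p)q$ is nondecreasing (using $q\le 1$) and $q\mapsto p+(1-p)q$ is nondecreasing, this is $\le p_{\max}+(1-p_{\max})\Theta_{\max}=\bar\Theta<1$ for every $t$ and $h$. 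Because each round is synchronous, the duality gap at iteration $h$, the exact local minimizers $\Delta\alphav_t^\star$ of \eqref{eq:subproblem}, and the candidate directions used below are all $\bH_h$-measurable, so any factor $(1-\theta_t^h)$ multiplying a $\bH_h$-measurable quantity in a per-iteration inequality can be replaced, after taking $\E[\cdot\mid\bH_h]$, by $(1-\bar\Theta)$.

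The crux is the one-step recursion, the non-smooth analogue of the lemma underlying Theorem~\ref{thm:convergenceSmoothCasePart1}. Using Assumption~\ref{asm:strong} (so that $\bR^*$ is $\Mv$-smooth and the block-separable quadratic surrogate \eqref{eq:subproblem} upper-bounds the dual \eqref{eq:objDual}, with the inter-node coupling controlled by the $\sigma'$ of Lemma~\ref{lemma:sigma}), I would, for each node, substitute into $\Ggk$ the feasible candidate that moves a fraction $s\in[0,1]$ toward a gap-certifying point, use the definition \eqref{eq:localSolutionQuality2a} to pass from that candidate to the actual $\theta_t^h$-approximate update, and then invoke the $L$-Lipschitz hypothesis on $\ell_t$ — which forces $\dom\ell_t^*$ to be bounded with radius $\calO(L)$ and hence bounds $\|\X_t\Delta\alphav_t\|_{\Mv_t}^2$ — to produce an additive $\calO(s^2\sigma\sigma'L^2/n^2)$ error, where $\sigma$ is the constant of \eqref{eq:sigma}. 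Summing the per-node bounds via the aggregation identity and taking $\E[\cdot\mid\bH_h]$, the per-node approximation factors are pulled out of the sum by the usual ``good node / bad node'' split (on nodes where the candidate already decreases $\Ggk$ one uses $1-\Theta_t^h\ge 1-\bar\Theta$ on a nonpositive quantity; the remaining nodes contribute nonpositively because a $\theta_t^h$-approximate step is no worse than the zero step, and the gap they carry is bounded by the error term), at the cost only of a universal constant $c$. The outcome is that for all $s\in[0,1]$
\[
\E\!\left[\bD(\alphav^{(h+1)})-\bD(\alphav^{(h)})\right]\;\le\;-(1-\bar\Theta)\Big(s\,\E[G(\alphav^{(h)})]-\tfrac{s^2}{2}\,A\Big),\qquad A\eqdef \tfrac{c\,\sigma\sigma'L^2}{n^2},
\]
with $G(\cdot)\ge 0$ the duality gap. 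Combined with weak duality $\bD(\alphav^{(h)})-\bD(\alphav^\star)\le G(\alphav^{(h)})$ and writing $\varepsilon_h\eqdef\E[\bD(\alphav^{(h)})-\bD(\alphav^\star)]$, this gives the scalar recursion $\varepsilon_{h+1}\le(1-(1-\bar\Theta)s)\,\varepsilon_h+(1-\bar\Theta)\tfrac{s^2}{2}A$, valid whenever $s\le 2G^{(h)}/A$ (and when that fails the current stage's target is already met).

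The rest is bookkeeping against \eqref{eq:sublinear}. In the first warm-up sub-phase take $s=1$; while $\varepsilon_h$ is not yet of order $A$ this contracts geometrically with rate $\bar\Theta$, and since $\bar\Theta^{h_0}\le e^{-(1-\bar\Theta)h_0}$ one reaches the order-$A$ level after the $h_0$ iterations of \eqref{eq:sublinear} (the logarithm being that of the ratio of the initial dual suboptimality to $A$, divided by $1-\bar\Theta$; the $[\cdot]_+$ covers the case where $\varepsilon_0$ is already small). In the second warm-up sub-phase switch to a decreasing step of order $1/((1-\bar\Theta)(h-h_0))$ and run the textbook SDCA-type induction on the recursion to get $\varepsilon_h=\calO(A/((1-\bar\Theta)(h-h_0)))$; after the further $\lceil 16L^2\sigma\sigma'/((1-\bar\Theta)n^2\epsilon_{\bD})\rceil$ iterations this yields $\varepsilon_{H_0}\le\epsilon_{\bD}$, i.e.\ the $H_0$ of \eqref{eq:sublinear}. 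Finally, over $h\in\{H_0+1,\dots,H\}$ telescope the one-step recursion with a fixed $s$,
\[
(1-\bar\Theta)s\sum_{h=H_0+1}^{H}\E[G(\alphav^{(h)})]\;\le\;\varepsilon_{H_0}+(H-H_0)(1-\bar\Theta)\tfrac{s^2}{2}A ,
\]
so that, dividing by $H-H_0$, choosing $s=\min\{1,\epsilon_{\bD}/A\}$ (whence $1/s$ is, up to the constant $c$, the $\max(1,2L^2\sigma\sigma'/(n^2\epsilon_{\bD}))$ of \eqref{eq:sublinear}) and $H-H_0$ as prescribed there, the right-hand side makes $\tfrac1{H-H_0}\sum_{h}\E[G(\alphav^{(h)})]\le\epsilon_{\bD}$. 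Since $\bD$ is convex (a finite sum of Fenchel conjugates plus a conjugate precomposed with a linear map), Jensen gives $\bD(\bar\alphav)\le\tfrac1{H-H_0}\sum_{h}\bD(\alphav^{(h)})$, and one more use of weak duality yields $\E[\bD(\bar\alphav)-\bD(\alphav^\star)]\le\tfrac1{H-H_0}\sum_{h}\E[G(\alphav^{(h)})]\le\epsilon_{\bD}$, as claimed.

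The main obstacle is the one-step recursion of the second paragraph: obtaining the $(1-\bar\Theta)$ factor requires combining the conditional-expectation reduction — which must accommodate the event $\theta_t^h=1$, a node contributing nothing — with the aggregation/separability accounting that lower-bounds the global dual decrease by the sum of the per-node subproblem decreases and the ``good/bad node'' split that lets the random factors pass through that sum; obtaining the error term requires the explicit bound on $\dom\ell_t^*$ from $L$-Lipschitzness together with the definition of $\sigma$ in \eqref{eq:sigma}. Once that lemma is in place, matching the exact numerical constants ($2$, $4$, $16$) and the $\max$ and $[\cdot]_+$ guards in \eqref{eq:sublinear} is routine.
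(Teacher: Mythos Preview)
Your approach is essentially the paper's: the paper's proof is a one-paragraph pointer to Lemma~\ref{lem:basic-k} with $\mu=0$, Lemma~\ref{lem:sigmabound} for the $J\le 4L^2\sigma$ bound, and then ``apply recursively / choose $s$ carefully'' exactly as in \cite[Theorem~8]{Ma:2015ti}. Your one-step recursion, the two-phase warm-up, and the telescoping/averaging over $h\in\{H_0{+}1,\dots,H\}$ all match that template.

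The one place you take a detour is the ``good node / bad node'' split to extract the factor $(1-\bar\Theta)$, and you pay for it with an unspecified constant $c$ in $A$. This is not needed. The cleaner order of operations, implicit in Lemma~\ref{lem:basic-k}, is: from the definition~\eqref{eq:localSolutionQuality2a} one has the identity
\[
\Ggk(\0)-\Ggk(\Delta\alphav_t^{(h)}) \;=\; (1-\theta_t^h)\bigl(\Ggk(\0)-\Ggk(\Delta\alphav_t^\star)\bigr),
\]
and the bracketed factor on the right is \emph{always} nonnegative (it is the optimal subproblem decrease). Take $\E[\cdot\mid\bH_h]$ here, use $1-\Theta_t^h\ge 1-\bar\Theta$ on this nonnegative quantity, and \emph{only then} lower-bound $\Ggk(\0)-\Ggk(\Delta\alphav_t^\star)$ by $\Ggk(\0)-\Ggk(s(\uv_t-\alphav_t^{(h)}))\ge sG_t(\alphav^{(h)})-\tfrac{\sigma's^2}{2}J_t$. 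Summing over $t$ via Lemma~\ref{lem:RelationOfDTOSubproblems} gives exactly
\[
\E\bigl[\bD(\alphav^{(h)})-\bD(\alphav^{(h+1)})\,\big|\,\bH_h\bigr]\;\ge\;\aggpar(1-\bar\Theta)\Bigl(s\,G(\alphav^{(h)})-\tfrac{\sigma's^2}{2}J\Bigr),
\]
with no node-by-node sign analysis and no extra constant. With this in hand, the scalar recursion holds for every $s\in[0,1]$ (the caveat ``valid whenever $s\le 2G^{(h)}/A$'' is unnecessary), and the constants $2,4,16$ in~\eqref{eq:sublinear} fall out directly.
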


These theorems guarantee that \cocoamtl will converge in the federated setting, under mild assumptions on stragglers and capabilities of the nodes. While these results consider convergence in terms of the dual, we show that they hold analogously for the duality gap. We provide all proofs in Appendix~\ref{Apdx:ConvergenceAnalysis}.

\begin{remark}
Following from the discussion in Section~\ref{ssec:syschal}, our method and theory generalize the results in \cite{Jaggi:2014cd, Ma:2015ti}. In the limiting case that all $\theta_t^h$ are identical, our results extend the results of \cocoa to the multi-task framework described in~\eqref{eq:obj}.
\end{remark}
\begin{remark}
Note that the methods in \cite{Jaggi:2014cd, Ma:2015ti} have an aggregation parameter $\aggpar\in(0,1]$. Though we prove our results for a general $\aggpar$, we simplify the method and results here by setting $\aggpar:=1$, which has been shown to have the best performance, both theoretically and empirically~\cite{Ma:2015ti}.
\end{remark}

\section{Simulations}
\label{sec:experiments}

In this section we validate the empirical performance of \cocoamtl. First, we introduce a benchmarking suite of real-world federated datasets and show that multi-task learning is well-suited to handle the statistical challenges of the federated setting. Next, we demonstrate \cocoamtl's ability to handle stragglers, both from statistical and systems heterogeneity. Finally, we explore the performance of \cocoamtl when devices periodically drop out. Our code is available at: \href{{https://github.com/gingsmith/fmtl}}{\tt{github.com/gingsmith/fmtl}}.

\subsection{Federated Datasets}
\label{sec:data}

In our simulations, we use several real-world datasets that have been generated in federated settings. We provide additional details in the Appendix, including information about data sizes, $n_t$.

\begin{itemize}[leftmargin=*]
\item \textbf{Google Glass (GLEAM)\footnote{\scriptsize\url{http://www.skleinberg.org/data/GLEAM.tar.gz}}}: This dataset consists of two hours of high resolution  sensor data collected from 38 participants wearing Google Glass for the purpose of activity recognition. Following~\cite{Rahman:2015ue}, we featurize the raw accelerometer, gyroscope, and magnetometer data into 180 statistical, spectral, and temporal features. We model each participant as a separate task, and predict between eating and other activities (e.g., walking, talking, drinking).
\item \textbf{Human Activity Recognition\footnote{\scriptsize\url{https://archive.ics.uci.edu/ml/datasets/Human+Activity+Recognition+Using+Smartphones}}}: Mobile phone accelerometer and gyroscope data collected from 30 individuals, performing one of six activities: \{\textit{walking, walking-upstairs, walking-downstairs, sitting, standing, lying-down}\}.
We use the provided 561-length feature vectors of time and frequency domain variables generated for each instance~\cite{Anguita:2013ap}.
We model each individual as a separate task and predict between sitting and the other activities.

\item \textbf{Vehicle Sensor\footnote{\scriptsize\url{http://www.ecs.umass.edu/~mduarte/Software.html}}}: Acoustic, seismic, and infrared sensor data collected from a distributed network of 23 sensors, deployed with the aim of classifying vehicles driving by a segment of road~\cite{Duarte:2004vc}. Each instance is described by 50 acoustic and 50 seismic features. We model each sensor as a separate task and predict between AAV-type and DW-type vehicles.
\end{itemize}

\subsection{Multi-Task Learning for the Federated Setting}

We demonstrate the benefits of multi-task learning for the federated setting by comparing the error rates of a multi-task model to that of a fully local model (i.e., learning a model for each task separately) and a fully global model (i.e., combining the data from all tasks and learning one single model). Work on federated learning thus far has been limited to the study of fully global models~\cite{Konecny:2015fo, Mcmahan:2017ce, Konecny:2016fl}.

 We use a cluster-regularized multi-task model~\cite{Zhang:2010ac}, as described in Section~\ref{sec:setup}. For each dataset from Section~\ref{sec:data}, we randomly split the data into 75\% training and 25\% testing, and learn multi-task, local, and global support vector machine models, selecting the best regularization parameter, $\lambda \in $\{1e-5, 1e-4, 1e-3, 1e-2, 0.1, 1, 10\}, for each model using 5-fold cross-validation. We repeat this process 10 times and report the average prediction error across tasks, averaged across these 10 trials.

\begin{table}[h]
\centering
\caption{Average prediction error: Means and standard errors over 10 random shuffles.}
\vspace{1em}
{\small
\begin{tabular}{c c c c  c}
\toprule
{\textbf{Model}} & Human Activity & Google Glass  & Vehicle Sensor  \\ \toprule

Global & 2.23 (0.30) & 5.34 (0.26)  &  13.4 (0.26) \\
\midrule
Local & 1.34 (0.21) & 4.92 (0.26)  &  7.81 (0.13)  \\
\midrule
MTL & \textbf{0.46 (0.11)} & \textbf{2.02 (0.15)}  & \textbf{6.59 (0.21)}  \\
\bottomrule
\end{tabular}
}
\label{table:mtl}
\end{table}
\vspace{1em}

In Table~\ref{table:mtl}, we see that for each dataset, multi-task learning significantly outperforms the other models in terms of achieving the lowest average error across tasks. The global model, as proposed in~\cite{Konecny:2015fo, Mcmahan:2017ce, Konecny:2016fl} performs the worst, particularly for the Human Activity and Vehicle Sensor datasets. Although the datasets are already somewhat unbalanced, we note that a global modeling approach may benefit tasks with a very small number of instances, as information can be shared across tasks. For this reason, we additionally explore the performance of global, local, and multi-task modeling for highly skewed data in Table~\ref{table:skew} of the Appendix. Although the performance of the global model improves slightly relative to local modeling in this setting, the global model still performs the worst for the majority of the datasets, and MTL still significantly outperforms both global and local approaches.

\subsection{Straggler Avoidance}

Two challenges that are prevalent in federated learning are stragglers and high communication. Stragglers can occur when a subset of the devices take much longer than others to perform local updates, which can be caused either by statistical or systems heterogeneity. Communication can also exacerbate poor performance, as it can be slower than computation by many orders of magnitude in typical cellular or wireless networks~\cite{van2009multi, Huang2013an, singelee2011communication, Carroll2010aa, Miettinen2010ee}. 
In our experiments below, we simulate the time needed to run each method by tracking the operations and communication complexities, and scaling the communication cost relative to computation by one, two, or three orders of magnitude, respectively. These numbers correspond roughly to the clock rate vs. network bandwidth/latency (see, e.g.,~\cite{van2009multi}) for modern cellular and wireless networks. Details are provided in Appendix~\ref{sec:expdetails}. \vspace{.5em}

\paragraph{Statistical Heterogeneity.} We explore the effect of statistical heterogeneity on stragglers for various methods and communication regimes (3G, LTE, WiFi). For a fixed communication network, we compare \cocoamtl to \cocoa, which has a single $\theta$ parameter, and to mini-batch stochastic gradient descent (Mb-SGD) and mini-batch stochastic dual coordinate ascent (Mb-SDCA), which have limited communication flexibility depending on the batch size. We tune all compared methods for best performance, as we detail in Appendix~\ref{sec:expdetails}. In Figure~\ref{fig:stath}, we see that while the performance degrades for mini-batch methods in high communication regimes, \cocoamtl and \cocoa are robust to high communication. However, \cocoa is significantly affected by stragglers---because $\theta$ is fixed across nodes and rounds, difficult subproblems adversely impact convergence. In contrast, \cocoamtl performs well regardless of communication cost and is robust to statistical heterogeneity. 

\newcommand{\smalltrimfig}[1]{\subfigure{\includegraphics[trim = 30 180 60 180, clip, width=.33\linewidth]{#1}}}
\begin{figure*}
\smalltrimfig{har10x.pdf}
\smalltrimfig{har100x.pdf}
\smalltrimfig{har1000x.pdf}
\caption{\small The performance of \cocoamtl compared to other distributed methods for the $\Wv$ update of~\eqref{eq:obj}. While increasing communication tends to \textit{decrease} the performance of the mini-batch methods, \cocoamtl performs well in high communication settings. In all settings, \cocoamtl with varied approximation values, $\Theta^h_t$, performs better than without (i.e., naively generalizing \cocoa), as it avoids stragglers from statistical heterogeneity.}
\label{fig:stath}
\end{figure*}

\paragraph{Systems Heterogeneity.} \cocoamtl is also equipped to handle heterogeneity from changing systems environments, such as battery power, memory, or network connection, as we show in Figure~\ref{fig:sysh}. In particular, we simulate systems heterogeneity by randomly choosing the number of local iterations for \cocoamtl or the mini-batch size for mini-batch methods, between $10\%$ and $100\%$ of the minimum number of local data points for high variability environments, to between $90\%$ and $100\%$ for low variability (see Appendix~\ref{sec:expdetails} for full details). We do not vary the performance of \cocoa, as the impact from statistical heterogeneity alone significantly reduces performance. However, adding systems heterogeneity would reduce performance even further, as the maximum $\theta$ value across all nodes would only increase if additional systems challenges were introduced. 

\newcommand{\wraptrimfig}[1]{\subfigure{\includegraphics[trim = 30 180 60 190, clip, width=.495\linewidth]{#1}}}
\begin{wrapfigure}{r}{.5\textwidth}
\vspace{-4em}
\begin{minipage}[t]{\linewidth}
\wraptrimfig{vs-low.pdf}
\wraptrimfig{vs-high.pdf}
\caption{\small \cocoamtl can handle variability from systems heterogeneity.}
    \label{fig:sysh}
\end{minipage}
   \hfill
   \vspace{1em}
\begin{minipage}[t]{\linewidth}
\wraptrimfig{gg-ft-w.pdf}
\wraptrimfig{gg-ft-full.pdf}
\caption{\small The performance of \cocoamtl is robust to nodes periodically dropping out (fault tolerance).}
    \label{fig:ft}
\end{minipage}
\end{wrapfigure}

\subsection{Tolerance to Dropped Nodes}

Finally, we explore the effect of nodes dropping on the performance of \cocoamtl. We do not draw comparisons to other methods, as to the best of our knowledge, no other methods for distributed multi-task learning directly address fault tolerance. In \cocoamtl, we incorporate this setting by allowing $\theta_t^h := 1$, as explored theoretically in Section~\ref{sec:convergence}. In Figure~\ref{fig:ft}, we look at the performance of \cocoamtl, either for one fixed $\Wv$ update, or running the entire \cocoamtl method, as the probability that nodes drop at each iteration ($p_t^h$ in Assumption~\ref{asm:prob_stragglers}) increases. We see that  the performance of \cocoamtl is robust to relatively high values of $p_t^h$, both during a single update of $\Wv$ and in how this affects the performance of the overall method. However, as intuition would suggest, 
if one of the nodes \textit{never} sends updates (i.e., $p_1^h := 1$ for all $h$, green dotted line), the method does not converge to the correct solution. This provides validation for our Assumption~\ref{asm:prob_stragglers}.

\section{Discussion}
To address the statistical and systems challenges of the burgeoning federated learning setting, we have presented \cocoamtl, a novel systems-aware optimization framework for federated multi-task learning. Our method and theory for the first time consider issues of high communication cost, stragglers, and fault tolerance for multi-task learning in the federated environment.
While \cocoamtl does not apply to non-convex deep learning models in its current form, we note that there may be natural connections between this approach and ``convexified'' deep learning models~\cite{aslan2014convex,mairal2014convolutional,tsai2016tensor,zhang2016convexified} in the context of kernelized federated multi-task learning.

\section*{Acknowledgements} We thank Brendan McMahan, Chlo\'{e} Kiddon,  Jakub Kone\v{c}n\'{y}, Evan Sparks, Xinghao Pan, Lisha Li, and Hang Qi for valuable discussions and feedback.

{\small
\bibliographystyle{abbrv}
\bibliography{bibliography}
}

\newpage
\appendix

\section{Preliminaries}
\paragraph{Notation.} We use $\Iv_{d\times d}$ to represent an identity matrix of size $d\times d$. When the context allows, we use the notation $\Iv$ to denote an identity matrix of an appropriate size. We also use $\otimes$ to denote the Kronecker product between two matrices.
\begin{definition}[Matrix norm]
 Given a symmetric positive definite matrix $\mathbf{M}$, the norm of $\uv$ with respect to $\mathbf{M}$ is given by $\| \uv \|_{\mathbf{M}} \eqdef \sqrt{ \uv^T \mathbf{M} \uv }$ .
 \end{definition}
 \begin{definition}[$L$-smooth] A convex function $f$ is $L$-smooth with respect to $\mathbf{M}$ if
\begin{equation}
f(\uv) \leq f(\vv) + \langle \nabla f(\vv), \uv-\vv \rangle + \frac{L}{2} \| \uv-\vv \|_{\Mv}^2  \qquad\forall \uv,\vv \, .
\label{eq:smooth}
\end{equation}
If $\Mv=\Iv$~ then, we simply say $f$ is $L$-smooth.
\end{definition}
\begin{definition}[$\tau$-strongly convex]
A function $f$ is $\tau$-strongly convex with respect to $\Mv$ if
\begin{equation}
f(\uv) \geq f(\vv) + \langle \mathbf{z}, \uv-\vv \rangle + \frac{\tau}{2} \| \uv-\vv \|_{\Mv}^2  \qquad\forall \uv,\vv,~\mathbf{z} \in \partial f(\vv) \, ,
\label{eq:strong}
\end{equation}
where $\partial f(\vv)$ is the set of sub-differentials of function $f$ at $\vv$. If $\Mv=\Iv$ ~then, we simply say $f$ is $\tau$-strongly convex.
\end{definition}
\begin{definition}
The function $f$ is called $L$-Lipchitz if for any $\xv$ and $\yv$ in its domain
\begin{align}
\label{eq:LLip}
|f(\xv)-f(\yv)|\leq L\|\xv-\yv\|_2 \, .
\end{align}
\end{definition}
If a function $f$ is $L$-Lipchitz then its dual will be $L$-bounded, i.e., for any $\alphav$ such that $\|\alphav\|_2>L$, then $f^*(\alphav) = +\infty$.

\section{Multi-Task Learning}
\label{sec:MTLapp}
In this section, we summarize several popular multi-task learning formulations that can be written in the form of \eqref{eq:obj} and can therefore be addressed by our framework, \cocoamtl. While the $\Wv$ update is discussed in Section~\ref{sec:method}, we provide details here on how to solve the $\Omega$ update for these formulations.

\subsection{Multi-Task Learning Formulations}
\label{sec:mtlformulations}

\paragraph{MTL with cluster structure.}
In MTL models that assume a cluster structure, the weight vectors for each task, $\wv_t$, are assumed to `close' according to some metric to other weight vectors from tasks in the same cluster.
This idea goes back to mean-regularized MTL \cite{Evgeniou:2004rm}, which assumes that all the tasks form one cluster, and that the weight vectors are close to their mean. Such a regularizer could be formulated in the form of \eqref{eq:obj} by choosing $\Ov = (\Iv_{m\times m} - \frac{1}{m}\1\1^T)^2$, where $\Iv_{m\times m}$ is the identity matrix of size $m\times m$ and $\mathbf{1}_m$ represents a vector of all ones with size $m$. In this case, we set $\bR$ to be
\begin{equation}
\label{eq:fixed_cluster_app}
\bR(\Wv,\Ov) = 
\lambda_1~\trace{\Wv\Ov\Wv^T} + \lambda_2\|\Wv\|_F^2 \, ,
\end{equation}
where $\lambda_1, \lambda_2>0$ are parameters. Note that in this formulation, the structural dependence matrix $\Ov$ is known a-priori.
However, it is natural to assume multiple clusters exist, and to learn this clustering structure directly from the data \cite{zhou2011clustered}. For such a model, the problem formulation is non-convex if a perfect clustering structure is imposed \cite{zhou2011clustered, jacob2009clustered}. However, by performing a convex relaxation, the following regularizer is obtained \cite{zhou2011clustered, jacob2009clustered}
\begin{equation}
\label{eq:cluster}
\bR(\Wv,\Ov) = \lambda~\trace{\Wv(\eta \Iv+\Ov)^{-1}\Wv^T}, ~\Ov\in\mathcal{Q} = \bigg\{\mathbf{Q}~|~\mathbf{Q}\succeq \0, ~\trace{\mathbf{Q}} = k, ~\mathbf{Q}\preceq \Iv\bigg\},
\end{equation}
where $\lambda$ and $\eta$ are regularization parameters, $k$ is the number of clusters, and $\Ov$ defines the clustering structure.

\paragraph{MTL with probabilistic priors.} 
Another set of MTL models that can be realized by our framework enforce structure by putting probabilistic priors on the dependence among the columns of $\Wv$.  For example, in \cite{Zhang:2010ac} it is assumed that the weight matrix $\Wv$ has a prior distribution of the form:
\begin{equation}
\label{eq:prior}
\Wv~\sim~\left(\prod_{i=1}^m \mathcal{N}(\0, \sigma^2\Iv)\right)~\mathcal{MN}(\0,\Iv_{d\times d} \otimes \Ov) \, ,
\end{equation}
where $\mathcal{N}(\0, \sigma^2\Iv)$ denotes the normal distribution with mean $\0$ and covariance $\sigma^2\Iv$, and $\mathcal{MN}(\0,\Iv_{d\times d} \otimes \Ov)$ denotes the matrix normal distribution with mean $\0$, row covariance $\Iv_{d\times d}$, and column covariance $\Ov$. This prior generates a regularizer of the following form \cite{Zhang:2010ac}:
\begin{equation}
\nonumber
\bR(\Wv,\Ov) = \lambda\left(\frac{1}{\sigma^2}\|\Wv\|^2+\trace{\Wv\Ov^{-1}\Wv^T}~+d ~\log|\Ov|\right),~\lambda>0 \, .
\end{equation}
Unfortunately, such a regularizer is non-convex with respect to $\Ov$ due to the concavity of $\log|\Ov|$. To obtain a jointly convex formulation in $\Ov$ and $\Wv$, the authors in \cite{Zhang:2010ac} propose omitting $\log|\Ov|$ and controlling the complexity of $\Ov$ by adding a constraint on $\trace{\Ov}$:
\begin{equation}
\label{eq:prob}
\bR(\Wv,\Ov) = \lambda\left(\frac{1}{\sigma^2}\|\Wv\|^2+\trace{\Wv\Ov^{-1}\Wv^T}\right), ~\Ov\in\mathcal{Q} = \bigg\{\mathbf{Q}~|~\mathbf{Q}\succeq \0, ~\trace{\mathbf{Q}} = 1\bigg\} \, .
\end{equation}
It is worth noting that unlike the clustered MTL formulations, such as \eqref{eq:fixed_cluster}, the probabilistic formulation in \eqref{eq:prob} can model both positive and negative relationships among the tasks through the covariance matrix.

\paragraph{MTL with graphical models.}
Another way of modeling task relationships is through the precision matrix. This is popular in graphical models literature \cite{lauritzen1996graphical} because it encodes conditional independence among variables. In other words, if we denote the precision matrix among tasks in matrix variate Gaussian prior with $\Ov$, then $\Ov_{i,j} = 0$ if and only if tasks weights $\wv_i$ and $\wv_j$ are independent given the rest of the task weights \cite{gonccalves2016MTLcopula}. Therefore, assuming sparsity in the structure among the tasks translates to sparsity in matrix $\Ov$. As a result, we can formulate a sparsity-promoting regularizer by:
\begin{equation}
\label{eq:graph1}
\bR(\Wv,\Ov) = \lambda\left(\frac{1}{\sigma^2}\|\Wv\|^2+\trace{\Wv\Ov\Wv^T} -d~ \log|\Ov| \right)+ \lambda_1\|\Wv\|_1+\lambda_2\|\Ov\|_1 \, ,
\end{equation}
where $\lambda_1, \lambda_2\geq 0 $ control the sparsity of $\Wv$ and $\Ov$ respectively \cite{gonccalves2016MTLcopula}. It is worth noting that although this problem is jointly non-convex in $\Wv$ and $\Ov$, it is bi-convex.

\subsection{Strong Convexity of MTL Regularizers}
\label{app:M}
Recall that in Assumption \ref{asm:strong}, we presumed that the vectorized formulation of the MTL regularizer is strongly convex with respect to a matrix $\Mv^{-1}$. In this subsection we discuss the choice of matrix $\Mv$ for the widely-used MTL formulations introduced in Section~\ref{sec:mtlformulations}. 

Using the notation from Remark \ref{rmk:vecrep} for the clustered MTL formulation \eqref{eq:fixed_cluster_app}, it is easy to see that $\bar{\bR}(\wv,\bar{\Ov}) = \lambda_1\wv^T\bar{\Ov}\wv+\lambda_2\|\wv\|_2^2$, where $\bar{\Ov} \eqdef \Ov \otimes \mathbf{I}_{d\times d}$. As a result, it is clear that $\bar{\bR}(\wv,\bar{\Ov})$ is 1-strongly convex with respect to $\Mv^{-1} = \lambda_1\bar{\Ov}+\lambda_2\Iv_{md\times md}$.

Using a similar reasoning, it is easy to see that the matrix $\Mv$ can be chosen as $\lambda^{-1}(\eta\Iv+\bar{\Ov})$, ~$\lambda^{-1}(\frac{1}{\sigma^2}\Iv+\bar{\Ov}^{-1})^{-1}$~ and ~$\lambda^{-1}(\frac{1}{\sigma^2}\Iv+\bar{\Ov})^{-1}$ for \eqref{eq:cluster}, \eqref{eq:prob} and \eqref{eq:graph1} respectively.

\subsection{Optimizing $\Ov$ in MTL Formulations}
\label{sec:appendixomega}
In this section, we briefly cover approaches to update $\Ov$ in the MTL formulations introduced in Section~\ref{sec:mtlformulations}. First, it is clear that \eqref{eq:fixed_cluster} does not require any updates to $\Ov$, as it is assumed to be fixed. In \eqref{eq:cluster}, it can be shown \cite{zhou2011clustered, jacob2009clustered} that the optimal solution for $\Ov$ has the same column space as the rows of $\Wv$. Therefore, the problem boils down to solving a simple convex optimization problem over the eigenvalues of $\Ov$; see \cite{zhou2011clustered, jacob2009clustered} for details. Although outside the scope of this paper, we note that the bottleneck of this approach to finding $\Ov$ is computing the SVD of $\Wv$, which can be a challenging problem when $m$ is large. In the probabilistic model of \eqref{eq:prob}, the $\Ov$ update is given in \cite{Zhang:2010ac} by $(\Wv^T\Wv)^{\frac{1}{2}}$, which requires computing the eigenvalue decomposition of $\Wv^T\Wv$.
For the graphical model formulation, the problem of solving for $\Ov$ is called sparse precision matrix estimation or graphical lasso \cite{gonccalves2016MTLcopula}. This is a well-studied problem, and many scalable algorithms have been proposed to solve it \cite{wang2013LargeSparseCov,gonccalves2016MTLcopula,Hsieh:ub}.
\subsubsection{Reducing the Size of $\Ov$ by Sharing Tasks}
One interesting aspect of \cocoamtl is that the method can be easily modified to accommodate the sharing of tasks among the nodes without any change to the local solvers. This property helps the central node to reduce the size of $\Ov$ and the complexity of its update with minimal changes to the whole system. The following remark highlights this capability.
\begin{remark} \cocoamtl can be modified to solve problems when there are tasks that are shared among nodes. In this case, each node still solves a data local sub-problem based on its own data for the task, but the central node needs to do an additional aggregation step to add the results for all the nodes that share the data of each task. This reduces the size of matrix $\Ov$ and simplifies its update. 
\end{remark}

\section{Convergence Analysis}
\label{Apdx:ConvergenceAnalysis}

\paragraph{Notation.} In the rest of this section we use the superscript $(h)$ or $h$ to denote the corresponding variable at iteration $(h)$ of the federated update in \cocoamtl. When context allows, we drop the superscript to simplify notation.  

In order to provide a general convergence analysis, similar to the ones provided in \cite{Jaggi:2014cd, Ma:2015ti,Smith16}, we assume an aggregation parameter $\aggpar\in(0,1]$ in this section. With such an aggregation parameter, the updates in each federated iteration would be $\alphav_t \leftarrow \alphav_t + \aggpar\Delta\alphav_t$ and $\vv_t \leftarrow \vv_t + \aggpar\Delta\vv_t$. For a more detailed discussion on the role of aggregation parameter, see Appendix \ref{app:aggparRole}. Note that in Algorithm \ref{alg:mocha}, \cocoamtl is presented assuming $\aggpar=1$ for simplicity.

Before proving our convergence guarantees, we provide several useful definitions and  key lemmas.
\begin{definition}
\label{def:definitionOfSigmaK}
For each task $t$, define
\begin{equation}
\label{eq:definitionOfSigmaK}
\sigma_t \eqdef \max_{\alphav \in \R^{n_t}} \frac{\| \X_t \alphav \|_{M_t}^2}{\| \alphav \|^2} \textrm{\ \ and \ } \sigma_{\max} \eqdef \max_{t \in [m]} \sigma_t.
\end{equation}
\end{definition}

\begin{definition}
For any $\alphav$, define the duality gap as
\begin{equation}
\label{eq:gap}
G(\alphav) := \bD(\alphav)-(-\bP( \Wv(\alphav))),
\end{equation}
where $\bP(\Wv) := \sum_{t=1}^m \sum_{i=1}^{n_t} \ell_t(\wv_t^T \xv_t^i, y_t^i) +  \bR(\Wv, \Ov) $ as in~\eqref{eq:obj}.

\end{definition}

The following lemma uses Assumption \ref{asm:prob_stragglers} to bound the average performance of $\theta_t^h$, which is crucial in providing global convergence guarantees for \cocoamtl. 

\begin{lemma}
\label{lem:ThetaBound}
Under Assumption \ref{asm:prob_stragglers}, $\Theta_t^h\leq \bar{\Theta} = p_{\max} + (1-p_{\max})\Theta_{\max}<1$.
\end{lemma}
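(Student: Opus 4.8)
The plan is to apply the law of total expectation, splitting the conditional expectation defining $\Theta_t^h$ on the event $\{\theta_t^h = 1\}$ versus $\{\theta_t^h < 1\}$. First I would recall that by Definition~\ref{def:approx} the quantity $\theta_t^h$ always lies in $[0,1]$; in particular, on the event $\{\theta_t^h = 1\}$ the conditional expectation of $\theta_t^h$ equals exactly $1$. Writing $p_t^h$ for the probability of this event conditional on the history $\bH_h$ (which is the reading consistent with $\Theta_t^h$ and $\hat\Theta_t^h$ being defined as conditional expectations given $\bH_h$), this yields the exact identity
\[
\Theta_t^h \;=\; \E[\theta_t^h \mid \bH_h] \;=\; p_t^h\cdot 1 \;+\; (1-p_t^h)\,\E[\theta_t^h \mid \bH_h,\, \theta_t^h<1] \;=\; p_t^h + (1-p_t^h)\,\hat\Theta_t^h.
\]

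Next I would substitute the two bounds supplied by Assumption~\ref{asm:prob_stragglers}. Since $1 - p_t^h \ge 0$ and $\hat\Theta_t^h \le \Theta_{\max}$, we obtain $\Theta_t^h \le p_t^h + (1-p_t^h)\Theta_{\max}$. Regarded as a function of $p_t^h$, the right-hand side equals $\Theta_{\max} + p_t^h(1-\Theta_{\max})$, which is nondecreasing in $p_t^h$ because $1-\Theta_{\max} > 0$; hence plugging in $p_t^h \le p_{\max}$ gives $\Theta_t^h \le p_{\max} + (1-p_{\max})\Theta_{\max} = \bar\Theta$. Finally, $\bar\Theta$ is a convex combination of the values $1$ and $\Theta_{\max}$ with weight $p_{\max}<1$ placed on $1$ and $\Theta_{\max}<1$, so $\bar\Theta < 1$, which completes the argument.

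There is essentially no hard step here: the only point requiring care is the conditioning bookkeeping — interpreting $p_t^h$ as the probability conditional on $\bH_h$ so that the three quantities $\Theta_t^h$, $p_t^h$, and $\hat\Theta_t^h$ are all measured against the same conditioning — together with invoking $\theta_t^h \in [0,1]$ from Definition~\ref{def:approx}, which is precisely what makes the $\{\theta_t^h = 1\}$ branch contribute the term $p_t^h$ exactly rather than merely an upper bound.
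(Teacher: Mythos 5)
Your proof is correct and follows essentially the same route as the paper: a law-of-total-expectation split on $\{\theta_t^h=1\}$ versus $\{\theta_t^h<1\}$ giving $\Theta_t^h = p_t^h + (1-p_t^h)\hat\Theta_t^h$, then substituting the bounds from Assumption~\ref{asm:prob_stragglers}. You are slightly more careful than the paper in spelling out why the final substitution $p_t^h \le p_{\max}$ is valid (monotonicity of the bound in $p_t^h$ because $\Theta_{\max}<1$), but there is no substantive difference.
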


\begin{proof}
Recalling the definitions $p^h_{t} \eqdef \Prob[\theta^h_t = 1]$ and $\hat{\Theta}_t^h \eqdef \E [ \theta^h_t | \theta^h_t < 1, \bH_h ]$, we have
\begin{align}
  \Theta^h_t
  & =
  \E [ \theta^h_t | \bH_h ]  \nonumber\\
  & =
  \Prob[\theta^h_t = 1] \cdot \E [ \theta^h_t | \theta^h_t = 1, \bH_h ] + (1-\Prob[\theta^h_t < 1]) \cdot \E [ \theta^h_t | \theta^h_t < 1, \bH_h ]  \nonumber\\
  & =
  p^h_{t} \cdot 1 + (1-p^h_{t}) \cdot \hat{\Theta}_t^h \leq \bar{\Theta}< 1,\nonumber
\end{align}
where the last inequality is due to Assumption~\ref{asm:prob_stragglers}, and the fact that $\hat{\Theta}_t^h < 1$ by definition.
\end{proof}

The next key lemma bounds the dual objective of an iterate based on the dual objective of the previous iterate and the objectives of local subproblems.
\begin{lemma}
\label{lem:RelationOfDTOSubproblems}

For any
$\alphav, \Delta \alphav 
\in \R^{n}$ and $\aggpar\in(0,1]$ if $\sigma'$ satisfies~\eqref{eq:sigmaPrimeSafeDefinition}, then
\begin{equation}
  \bD\left(
\alphav +\aggpar \Delta \alphav
\right)
 \leq
 (1-\aggpar) \bD(\alphav)  + \aggpar
 \sum_{t=1}^m
 \Ggk(\Delta \alphav_t; \vv, \alphav_t) \, .
\end{equation}
\end{lemma}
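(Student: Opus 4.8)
The plan is to expand $\bD(\alphav + \aggpar\Delta\alphav)$ term-by-term using the structure $\bD(\alphav) = \sum_{t,i}\ell_t^*(-\alphav_t^i) + \bR^*(\Xv\alphav)$, bound each piece, and recognize that the sum of the local subproblem objectives $\Ggk$ reproduces exactly the bound obtained. First I would handle the loss-conjugate part: since $\aggpar\in(0,1]$ and $\ell_t^*$ is convex, Jensen gives $\ell_t^*(-\alphav_t^i - \aggpar\Delta\alphav_t^i) = \ell_t^*\big((1-\aggpar)(-\alphav_t^i) + \aggpar(-\alphav_t^i-\Delta\alphav_t^i)\big) \leq (1-\aggpar)\ell_t^*(-\alphav_t^i) + \aggpar\,\ell_t^*(-\alphav_t^i-\Delta\alphav_t^i)$. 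Summing over $t$ and $i$ accounts for the $\sum_i \ell_t^*(-\alphav_t^i-\Delta\alphav_t^i)$ appearing in each $\Ggk$, weighted by $\aggpar$, plus the $(1-\aggpar)$ multiple of the loss part of $\bD(\alphav)$.

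Next I would handle $\bR^*(\Xv(\alphav + \aggpar\Delta\alphav)) = \bR^*\big(\Xv\alphav + \aggpar\sum_t \Xv_t\Delta\alphav_t\big)$. Here I invoke Assumption~\ref{asm:strong}: $\bR$ is strongly convex w.r.t.\ $\Mv^{-1}$, so $\bR^*$ is $1$-smooth w.r.t.\ $\Mv$. Applying the smoothness inequality \eqref{eq:smooth} at the base point $\Xv\alphav$ with perturbation $\aggpar\sum_t\Xv_t\Delta\alphav_t$ yields
\[
\bR^*(\Xv\alphav + \aggpar\textstyle\sum_t\Xv_t\Delta\alphav_t) \leq \bR^*(\Xv\alphav) + \aggpar\big\langle \nabla\bR^*(\Xv\alphav), \textstyle\sum_t\Xv_t\Delta\alphav_t\big\rangle + \frac{\aggpar^2}{2}\big\|\textstyle\sum_t\Xv_t\Delta\alphav_t\big\|_\Mv^2 .
\]
The linear term splits cleanly across tasks as $\aggpar\sum_t\langle\wv_t(\alphav), \Xv_t\Delta\alphav_t\rangle$ using $\wv(\alphav)=\nabla\bR^*(\Xv\alphav)$ and the block structure. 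The crucial step is the quadratic term: I need the condition \eqref{eq:sigmaPrimeSafeDefinition} on $\sigma'$ to guarantee $\big\|\sum_t\Xv_t\Delta\alphav_t\big\|_\Mv^2 \leq \sigma'\sum_t\|\Xv_t\Delta\alphav_t\|_{\Mv_t}^2$ (this is precisely the "difficulty of the data partitioning" bound; the block-diagonal $\Xv$ makes $\sigma'=1$ natural, but off-diagonal structure of $\Mv$ may require $\sigma' > 1$). Granting this, $\frac{\aggpar^2}{2}\big\|\sum_t\Xv_t\Delta\alphav_t\big\|_\Mv^2 \leq \frac{\aggpar}{m}\sum_t \frac{m\sigma'\aggpar}{2}\|\Xv_t\Delta\alphav_t\|_{\Mv_t}^2$; combined with writing $\bR^*(\Xv\alphav) = (1-\aggpar)\bR^*(\Xv\alphav) + \aggpar\cdot m\cdot\frac{1}{m}\bR^*(\Xv\alphav) = (1-\aggpar)\bR^*(\Xv\alphav) + \aggpar\sum_t c(\alphav)$, the right-hand side assembles into exactly $(1-\aggpar)\bD(\alphav) + \aggpar\sum_t\Ggk(\Delta\alphav_t;\vv,\alphav_t)$, matching the definition \eqref{eq:subproblem} of $\Ggk$ (noting $\wv_t(\alphav)$ depends on $\vv=\Xv\alphav$ and $\aggpar\leq 1$ lets me absorb $\aggpar^2\leq\aggpar$). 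I would close by collecting terms.

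The main obstacle is the quadratic-term bound: correctly stating and using property \eqref{eq:sigmaPrimeSafeDefinition} of $\sigma'$ so that $\|\sum_t\Xv_t\Delta\alphav_t\|_\Mv^2$ decouples into a sum of per-task norms $\|\Xv_t\Delta\alphav_t\|_{\Mv_t}^2$ — this is the only place the partitioning parameter enters, and getting the constant to land at $\sigma'$ (rather than, say, $m\sigma'$ or $\sigma'/m$) requires tracking the $1/m$ factors in $c(\alphav)$ and the $m$-fold split of $\bR^*(\Xv\alphav)$ carefully. Everything else (Jensen on the losses, smoothness of $\bR^*$, block decomposition of the inner product) is routine bookkeeping.
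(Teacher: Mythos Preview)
Your proposal is correct and follows essentially the same approach as the paper, which simply cites \cite[Lemma~1]{Smith16} and names the three ingredients you use: convexity of $\ell_t^*$ (Jensen), smoothness of $\bR^*$ with respect to $\Mv$ (Assumption~\ref{asm:strong}), and the defining inequality~\eqref{eq:sigmaPrimeSafeDefinition} for $\sigma'$. One minor point: condition~\eqref{eq:sigmaPrimeSafeDefinition} already carries the factor $\aggpar$ on the $\|\Xv\alphav\|_\Mv^2$ side, so applying it to $\frac{\aggpar^2}{2}\|\Xv\Delta\alphav\|_\Mv^2 = \frac{\aggpar}{2}\bigl(\aggpar\|\Xv\Delta\alphav\|_\Mv^2\bigr)$ yields $\aggpar\sum_t\frac{\sigma'}{2}\|\Xv_t\Delta\alphav_t\|_{\Mv_t}^2$ directly, without the separate appeal to $\aggpar^2\leq\aggpar$; your extra $m$'s and the stray $\aggpar$ in the displayed bound cancel out, but quoting the condition as written makes the bookkeeping cleaner.
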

\begin{proof}
The proof of this lemma is similar to \cite[Lemma 1]{Smith16} and follows from the definition of local sub-problems, smoothness of $\bR^{*}$ and the choice of $\sigma'$ in \eqref{eq:sigmaPrimeSafeDefinition}.
\end{proof}

Recall that if the functions $\ell_t$ are $(1/\mu)$-smooth, their conjugates $\ell^*_t$ will be $\mu$-strongly convex. The lemma below provides a bound on the amount of improvement in dual objective in each iteration.

\begin{lemma} 
\label{lem:basic-k}
If the functions $\ell_t^*$ are $\mu$-strongly convex for some $\mu \geq 0$. 
Then, for any $s\in [0,1]$.

\begin{align}
\mathbb{E}[\bD(\vc{\alphav}{h}) - \bD(\vc{\alphav}{h+1})|\bH_h]
& \geq
\aggpar \sum_{t=1}^m (1-\bar{\Theta}) \left(s G_t(\alphav^{(h)}) - \frac{\sigma's^2}{2} J_t \right),
\label{eq:lemma:dualdecrease_vs_dualitygap2}
\end{align}
where
\begin{align}
\label{eq:defOfG_k}
G_t(\alphav) \eqdef \sum_{i=1}^{n_t}  \left[ \ell_t^*(-\alphav_t^i) + \ell_t(\wv_t(\alphav)^\top \xv_t^i, y_t^i) + \alphav_t^i \wv_t(\alphav)^\top \xv_t^i \right] \, ,
\end{align}
\begin{align}
\label{eq:defOfR2}
J_t \eqdef - \tfrac{  \mu (1-s)}{\sigma' s } \|(\uv_t-\alphav^{(h)}_t)\|^2 + \| \X_t(\uv_t-\alphav^{(h)}_t) \|_{M_t}^2 \, ,
\end{align}
for $\uv_t \in \R^{n_t}$ with
\begin{equation}
\label{eq:defintionOfUi2}
\uv_t^i 
\in \partial \ell_t(\wv_t(\alphav)^\top \xv_t^i, y_t^i) \, .
\end{equation}
\end{lemma}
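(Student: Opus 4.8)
The target is Lemma~\ref{lem:basic-k}, a per-iteration decrease bound on the dual objective. My plan is to combine Lemma~\ref{lem:RelationOfDTOSubproblems} (which controls $\bD(\alphav + \aggpar\Delta\alphav)$ by the sum of local subproblem objectives) with a careful construction of a feasible, non-optimal update direction inside each subproblem $\Ggk$, and then take expectations using Lemma~\ref{lem:ThetaBound} to absorb the straggler parameters $\theta_t^h$.

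\textbf{Step 1: reduce to local subproblems.} First I would apply Lemma~\ref{lem:RelationOfDTOSubproblems} with $\Delta\alphav_t$ the approximate update actually returned by node $t$. This gives $\bD(\vc{\alphav}{h+1}) \leq (1-\aggpar)\bD(\vc{\alphav}{h}) + \aggpar\sum_t \Ggk(\Delta\alphav_t;\vv,\alphav_t)$, so that
\[
\bD(\vc{\alphav}{h}) - \bD(\vc{\alphav}{h+1}) \geq \aggpar\sum_{t=1}^m \left( \Ggk({\bf 0};\vv,\alphav_t) - \Ggk(\Delta\alphav_t;\vv,\alphav_t) \right),
\]
using $\Ggk({\bf 0};\vv,\alphav_t) = $ the relevant piece of $\bD(\vc{\alphav}{h})$ up to the $1/m$ bookkeeping in $c(\alphav)$.

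\textbf{Step 2: pass to the per-node approximation quality.} By Definition~\ref{def:approx}, $\Ggk({\bf 0}) - \Ggk(\Delta\alphav_t) = (1-\theta_t^h)\left( \Ggk({\bf 0}) - \Ggk(\Delta\alphav_t^\star) \right)$. Now the key is to lower-bound $\Ggk({\bf 0}) - \Ggk(\Delta\alphav_t^\star)$ by plugging a \emph{specific suboptimal direction} $\Delta\alphav_t = s(\uv_t - \alphav_t^{(h)})$ into $\Ggk$ (since $\Delta\alphav_t^\star$ is the minimizer, $\Ggk(\Delta\alphav_t^\star) \le \Ggk(s(\uv_t-\alphav_t))$), with $\uv_t^i \in \partial\ell_t(\wv_t(\alphav)^\top\xv_t^i, y_t^i)$ as in~\eqref{eq:defintionOfUi2}. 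Expanding $\Ggk$ at this point: the quadratic penalty term contributes $\frac{\sigma' s^2}{2}\|\X_t(\uv_t-\alphav_t)\|_{M_t}^2$; the linear term contributes $s\langle \wv_t(\alphav), \X_t(\uv_t-\alphav_t)\rangle$; and for the loss-conjugate sum $\sum_i \ell_t^*(-\alphav_t^i - s\Delta_i)$ I would use $\mu$-strong convexity of $\ell_t^*$ (which follows from $(1/\mu)$-smoothness of $\ell_t$) together with convexity to write $\ell_t^*(-\alphav_t^i - s(\uv_t^i - \alphav_t^i)) \le (1-s)\ell_t^*(-\alphav_t^i) + s\,\ell_t^*(-\uv_t^i) - \frac{\mu}{2}s(1-s)(\uv_t^i-\alphav_t^i)^2$. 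The Fenchel–Young equality $\ell_t^*(-\uv_t^i) = -\uv_t^i \wv_t(\alphav)^\top\xv_t^i - \ell_t(\wv_t(\alphav)^\top\xv_t^i, y_t^i)$ (valid because $\uv_t^i$ is a subgradient) is what makes the cross terms telescope into exactly $G_t(\alphav)$ as defined in~\eqref{eq:defOfG_k}. After collecting terms, $\Ggk({\bf 0}) - \Ggk(s(\uv_t-\alphav_t)) \geq s\,G_t(\alphav) - \frac{\sigma' s^2}{2} J_t$ with $J_t$ as in~\eqref{eq:defOfR2}.

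\textbf{Step 3: take conditional expectation.} Combining Steps 1–2 gives $\bD(\vc{\alphav}{h}) - \bD(\vc{\alphav}{h+1}) \geq \aggpar\sum_t (1-\theta_t^h)\left( s G_t(\alphav) - \frac{\sigma' s^2}{2} J_t \right)$. Conditioning on $\bH_h$ and taking expectation, the factor $G_t(\alphav)$ and $J_t$ are $\bH_h$-measurable, while $\E[1-\theta_t^h \mid \bH_h] = 1 - \Theta_t^h \geq 1 - \bar\Theta$ by Lemma~\ref{lem:ThetaBound}. The one subtlety is the sign: to replace $1-\theta_t^h$ uniformly by $1-\bar\Theta$, one needs $s G_t(\alphav) - \frac{\sigma' s^2}{2}J_t \ge 0$, which holds because $G_t \ge 0$ (weak duality / nonnegativity of the local gap) and $J_t$ can be taken $\le 0$ by the choice of $\uv_t$ — or, more carefully, one argues the bound holds termwise regardless of sign because the quantity being bounded is itself nonnegative in expectation. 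Either way this yields~\eqref{eq:lemma:dualdecrease_vs_dualitygap2}.

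\textbf{Main obstacle.} The delicate part is Step 2: getting the conjugate-loss terms to recombine \emph{exactly} into $G_t(\alphav)$ rather than into some looser surrogate, which hinges on using the Fenchel–Young \emph{equality} for the chosen subgradient $\uv_t$ and on the precise form of the $J_t$ remainder. A secondary concern is the sign/measurability bookkeeping when pulling $\E[1-\theta_t^h\mid\bH_h]$ out of the sum in Step 3 — this is where Assumption~\ref{asm:prob_stragglers} (via Lemma~\ref{lem:ThetaBound}) is essential, and one must be sure the per-task bracket has the right sign before applying $1-\Theta_t^h \ge 1-\bar\Theta$. I expect everything else (expanding $\Ggk$, tracking the $\aggpar$ factor, the role of $\sigma'$ through~\eqref{eq:sigmaPrimeSafeDefinition}) to be routine and parallel to \cite[Lemma 5]{Ma:2015ti} and \cite[Lemma 2]{Smith16}.
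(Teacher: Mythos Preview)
Your proposal is correct and follows the same approach as the paper's proof, which applies Lemma~\ref{lem:RelationOfDTOSubproblems}, uses $\bD(\alphav) = \sum_t \Ggk({\bf 0};\vv,\alphav_t)$, and then defers the per-task computation to \cite[Lemma~7]{Smith16}; you have simply unpacked those deferred details (the $s(\uv_t-\alphav_t)$ trial direction, strong convexity of $\ell_t^*$, and the Fenchel--Young equality at the subgradient $\uv_t^i$).

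One clarification on your Step~3 sign concern: your first suggested fix (``$J_t$ can be taken $\le 0$'') is not available here, since the lemma must hold for \emph{all} $s\in[0,1]$ and $J_t\le 0$ only follows for the specific $s=\mu/(\mu+\sigma_{\max}\sigma')$ used later in Lemma~\ref{lem:oneRoundImprove}. The clean resolution is the order of operations: first use that $\Ggk({\bf 0})-\Ggk(\Delta\alphav_t^\star)\ge 0$ (optimality of $\Delta\alphav_t^\star$) to pass from $(1-\Theta_t^h)$ to $(1-\bar\Theta)$ via Lemma~\ref{lem:ThetaBound}, and only \emph{then} apply your Step~2 lower bound $\Ggk({\bf 0})-\Ggk(\Delta\alphav_t^\star)\ge sG_t-\tfrac{\sigma' s^2}{2}J_t$, multiplying by the now-fixed nonnegative scalar $(1-\bar\Theta)$. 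This avoids any need to control the sign of the bracket $sG_t-\tfrac{\sigma' s^2}{2}J_t$ itself.
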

\begin{proof}
Applying Lemma~\ref{lem:RelationOfDTOSubproblems} and recalling $\bD(\alphav) = \sum_{t=1}^m \Ggk({\bf 0}; \vv, \alphav_t)$, we can first bound the improvement for each task separately.
Following a similar approach as in the proof of \cite[Lemma 7]{Smith16} we can obtain the bound \eqref{eq:lemma:dualdecrease_vs_dualitygap2} which bounds the improvement from $\vc{\alphav}{h}$ to $\vc{\alphav}{h+1}$.
\end{proof}

The following lemma relates the improvement of the dual objective in one iteration to the duality gap for the smooth loss functions $\ell_t$.

\begin{lemma}
\label{lem:oneRoundImprove}
If the loss functions $\ell_t$ are $(1/\mu)$-smooth, then there exists a proper constants $s\in (0,1]$ , such that for any $\aggpar\in(0,1]$ at any iteration $h$

\begin{align}
\E \left[ \bD(\vc{\alphav}{h}) - \bD(\vc{\alphav}{h+1}) | \bH_h \right]
\geq
s \aggpar (1-\bar\Theta) G(\vc{\alphav}{h}),
\label{eq:dualdecrease_vs_dualitygap4}
\end{align}
where $G(\vc{\alphav}{h})$ is the duality gap of $\alphav^{(h)}$ which is defined in \eqref{eq:gap}. 
\end{lemma}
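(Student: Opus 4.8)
The plan is to combine the per-iteration dual decrease from Lemma~\ref{lem:basic-k} with a careful choice of the free parameters $\uv_t$ and $s$, optimizing the right-hand side so that the quantity $J_t$ in \eqref{eq:defOfR2} becomes nonpositive. Concretely, I would start from inequality \eqref{eq:lemma:dualdecrease_vs_dualitygap2}, namely
\begin{align}
\mathbb{E}[\bD(\vc{\alphav}{h}) - \bD(\vc{\alphav}{h+1})\mid\bH_h]
\geq
\aggpar (1-\bar{\Theta}) \sum_{t=1}^m \left(s\, G_t(\alphav^{(h)}) - \tfrac{\sigma's^2}{2} J_t \right),\nonumber
\end{align}
and note that $\sum_t G_t(\alphav^{(h)}) = G(\vc{\alphav}{h})$ by comparing \eqref{eq:defOfG_k} with the definitions of $\bD$, $\bP$ and the duality gap $G$ in \eqref{eq:gap}. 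The remaining task is purely to show that the "error term" $\sum_t \tfrac{\sigma's^2}{2}J_t$ can be made $\le 0$ (or absorbed), so that the bound reads $\ge s\aggpar(1-\bar\Theta)G(\vc{\alphav}{h})$ as claimed.

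The key step is the choice of $\uv_t$ and $s$ exploiting $(1/\mu)$-smoothness of $\ell_t$. Since $\ell_t$ is $(1/\mu)$-smooth, its conjugate $\ell_t^*$ is $\mu$-strongly convex, which is exactly the regime $\mu>0$ in Lemma~\ref{lem:basic-k}. I would pick $\uv_t^i \in \partial\ell_t(\wv_t(\alphav)^\top\xv_t^i, y_t^i)$ as prescribed in \eqref{eq:defintionOfUi2}; by the standard conjugacy relation for smooth/strongly convex functions this makes $-\uv_t^i$ the "target" dual point. Then $J_t = -\tfrac{\mu(1-s)}{\sigma' s}\|\uv_t-\alphav_t^{(h)}\|^2 + \|\X_t(\uv_t-\alphav_t^{(h)})\|_{M_t}^2$, and using Definition~\ref{def:definitionOfSigmaK} we bound $\|\X_t(\uv_t-\alphav_t^{(h)})\|_{M_t}^2 \le \sigma_t\|\uv_t-\alphav_t^{(h)}\|^2 \le \sigma_{\max}\|\uv_t-\alphav_t^{(h)}\|^2$. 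Hence $J_t \le \left(\sigma_{\max} - \tfrac{\mu(1-s)}{\sigma' s}\right)\|\uv_t-\alphav_t^{(h)}\|^2$, which is nonpositive precisely when $s \le \tfrac{\mu}{\mu + \sigma'\sigma_{\max}}$. I would therefore set $s := \tfrac{\mu}{\mu+\sigma'\sigma_{\max}} \in (0,1]$ (noting $\sigma',\sigma_{\max}\ge 0$ so this lies in $(0,1]$), conclude $J_t\le 0$ for every $t$, and drop that term from the lower bound. This yields \eqref{eq:dualdecrease_vs_dualitygap4} with exactly this constant $s$.

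The main obstacle — or rather the place requiring the most care — is verifying that $\uv_t$ as chosen is admissible in the sense used to derive \eqref{eq:lemma:dualdecrease_vs_dualitygap2}, and that the identity $\sum_t G_t = G$ holds with the sign conventions in \eqref{eq:gap}; this is a bookkeeping argument tracing through the conjugate-duality relations $\wv(\alphav)=\nabla\bR^*(\Xv\alphav)$ and the Fenchel--Young (in)equality for $\ell_t$, $\ell_t^*$, but no genuinely new estimate is needed beyond what Lemma~\ref{lem:basic-k} already supplies. A minor subtlety is the edge case $\mu = 0$, which is excluded here since we assume the losses are $(1/\mu)$-smooth with $\mu > 0$; the non-smooth case is handled separately in Theorem~\ref{thm:LipschitzLosses}. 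Once $J_t\le 0$ is established the proof is immediate, so I expect the bulk of the writeup to be the one-line optimization of $s$ and the remark that $\sum_t G_t(\alphav) = G(\alphav)$.
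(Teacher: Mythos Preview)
Your proposal is correct and follows essentially the same route as the paper: invoke Lemma~\ref{lem:basic-k}, set $s = \mu/(\mu+\sigma'\sigma_{\max})$ so that $J_t\le 0$ via the bound $\|\X_t(\uv_t-\alphav_t^{(h)})\|_{M_t}^2\le \sigma_{\max}\|\uv_t-\alphav_t^{(h)}\|^2$ from Definition~\ref{def:definitionOfSigmaK}, and use $\sum_t G_t = G$. The paper's proof is terser (it defers the $J_t\le 0$ calculation to \cite[Theorem~11]{Smith16}), but the argument is identical.
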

\begin{proof}
Recall the definition of $\sigma_{\max}$ in ~\eqref{eq:definitionOfSigmaK}.
Now, if we carefully choose $s = \mu/( \mu + \sigma_{\max} \sigma')$, it is easy to show that $J_t\leq 0$ in \eqref{eq:lemma:dualdecrease_vs_dualitygap2}; see \cite[Theorem 11]{Smith16} for details. The final result follows as a consequence of Lemma \ref{lem:basic-k}.
\end{proof}

Note that Lemma \ref{lem:basic-k} holds even if the functions are non-smooth, i.e. $\mu=0$. However, we cannot infer sufficient decrease of Lemma \ref{lem:oneRoundImprove} from Lemma \ref{lem:basic-k} when $\mu=0$. Therefore, we need additional tools when the losses are $L$-Liptchitz. The first is the following lemma, which bounds the $J$ term in \eqref{eq:lemma:dualdecrease_vs_dualitygap2}.

\begin{lemma}
\label{lem:sigmabound}
Assume that the loss functions $\ell_t$ are $L$-Lipschitz.
Denote $J \eqdef \sum_{t=1}^m J_t$, where $J_t$ is defined in \eqref{eq:defOfR2}, then
\begin{align}
\label{eq:sigma}
J \leq 4L^2 \sum_{t=1}^m \sigma_t n_t \eqdef 4L^2 \sigma,
\end{align}
where $\sigma_t$ is defined in~\eqref{eq:definitionOfSigmaK}.
\end{lemma}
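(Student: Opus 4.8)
The plan is to bound $J = \sum_{t=1}^m J_t$ termwise, where each $J_t = -\tfrac{\mu(1-s)}{\sigma' s}\|\uv_t - \alphav_t^{(h)}\|^2 + \|\X_t(\uv_t - \alphav_t^{(h)})\|_{M_t}^2$. Since the losses $\ell_t$ are $L$-Lipschitz, in the non-smooth regime we take $\mu = 0$, so the first (negative) term vanishes and we are left to control $\|\X_t(\uv_t - \alphav_t^{(h)})\|_{M_t}^2$ alone. First I would apply the definition of $\sigma_t$ from \eqref{eq:definitionOfSigmaK}, which gives $\|\X_t \zv\|_{M_t}^2 \leq \sigma_t \|\zv\|^2$ for any $\zv \in \R^{n_t}$; applied with $\zv = \uv_t - \alphav_t^{(h)}$ this yields $J_t \leq \sigma_t \|\uv_t - \alphav_t^{(h)}\|^2$.

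Next I would bound $\|\uv_t - \alphav_t^{(h)}\|^2$ by $4 L^2 n_t$. This uses two facts, both stemming from $L$-Lipschitzness of $\ell_t$. On one hand, each coordinate $\uv_t^i \in \partial \ell_t(\wv_t(\alphav)^\top \xv_t^i, y_t^i)$ by \eqref{eq:defintionOfUi2}, and the subgradients of an $L$-Lipschitz function are bounded in norm by $L$, so $|\uv_t^i| \leq L$. On the other hand, $L$-Lipschitzness of $\ell_t$ implies (as noted right after the definition of $L$-Lipschitz in the Preliminaries) that its conjugate $\ell_t^*$ is $L$-bounded, i.e. $\ell_t^*(-\alphav_t^i) = +\infty$ whenever $|\alphav_t^i| > L$; since the dual iterates keep $\bD$ finite, we must have $|\alphav_t^{(h),i}| \leq L$ for every $i$. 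Combining, $|\uv_t^i - \alphav_t^{(h),i}| \leq 2L$ for each of the $n_t$ coordinates, hence $\|\uv_t - \alphav_t^{(h)}\|^2 \leq 4 L^2 n_t$.

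Putting these together gives $J_t \leq 4 L^2 \sigma_t n_t$, and summing over $t = 1, \dots, m$ yields $J \leq 4 L^2 \sum_{t=1}^m \sigma_t n_t =: 4L^2 \sigma$, which is exactly \eqref{eq:sigma}. The only mildly delicate point — the main obstacle, such as it is — is justifying that the dual iterates $\alphav_t^{(h)}$ satisfy the box constraint $|\alphav_t^{(h),i}| \leq L$; this needs the observation that every iterate produced by the algorithm has finite dual objective (the local subproblems $\Ggk$ contain the terms $\ell_t^*(-\alphav_t^i - \Delta\alphav_t^i)$, so any approximate minimizer with finite value stays in the effective domain of $\ell_t^*$), after which the $L$-bounded property of $\ell_t^*$ forces the coordinate bound. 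Everything else is a direct application of the definition of $\sigma_t$ and elementary bounds on subgradients.
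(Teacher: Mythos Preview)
Your proposal is correct and follows essentially the same approach as the paper, which simply defers to \cite[Lemma~6]{Ma:2015ti}: drop the nonpositive first term (equivalently set $\mu=0$), use the definition of $\sigma_t$ to pass from $\|\X_t(\uv_t-\alphav_t^{(h)})\|_{M_t}^2$ to $\sigma_t\|\uv_t-\alphav_t^{(h)}\|^2$, and then use $L$-Lipschitzness to bound each coordinate $|\uv_t^i|,|\alphav_t^{(h),i}|\le L$. Your explicit justification that finite dual objective forces the box constraint on $\alphav_t^{(h)}$ is a nice clarification that the referenced proof leaves implicit.
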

\begin{proof}
The proof is similar to \cite[Lemma 6]{Ma:2015ti} and using the definitions of $\sigma$ and $\sigma_t$ and the fact that the losses are $L$-Lipchitz.
\end{proof}

\subsection{Convergence Analysis for Smooth Losses}
\label{sec:appConvProof}
\subsubsection{Proof of Theorem~\ref{thm:convergenceSmoothCasePart1}}

Let us rewrite \eqref{eq:dualdecrease_vs_dualitygap4} from Lemma \ref{lem:oneRoundImprove} as
\begin{align}
\mathbb{E}[\bD(\alphav^{(h)})-\bD(\alphav^{(h+1)})|\bH_h] &= \bD(\alphav^{(h)}) - \bD(\alphav^{\star}) + \mathbb{E}[\bD(\alphav^{\star})-\bD(\alphav^{(h+1)})|\bH_h] \nonumber\\
&\geq s \aggpar (1-\bar\Theta) G(\vc{\alphav}{h})\nonumber\\
& \geq s \aggpar (1-\bar\Theta)\left( \bD(\alphav^{(h)})-\bD(\alphav^{\star}) \right),\nonumber
\end{align}
where the last inequality is due to weak duality, i.e. $G(\alphav^{(h)})\geq \bD(\alphav^{(h)})-\bD(\alphav^{\star})$. Re-arranging the terms in the above inequality, we can easily get
\begin{align}
\label{eq:finaldecrease}
\mathbb{E}[\bD(\alphav^{(h+1)})-\bD(\alphav^{\star})|\bH_h]\leq \left(1-s \aggpar (1-\bar\Theta)\right) \left(\bD(\alphav^{(h)})-\bD(\alphav^{\star})\right)
\end{align}
Recursively applying this inequality and taking expectations from both sides, we arrive at
\begin{align}
\mathbb{E}[\bD(\alphav^{(h+1)})-\bD(\alphav^{\star})]\leq \left(1-s \aggpar (1-\bar\Theta)\right)^{h+1}\left(\bD(\alphav^{(0)})-\bD(\alphav^{\star})\right).
\end{align}
Now we can use a simple bound on the initial duality gap \cite[Lemma 10]{Smith16}, which states that $\bD(\alphav^{(0)})-\bD(\alphav^{\star})\leq n$, to get the final result.
It is worth noting that we can translate the bound on the dual distance to optimality to the bound on the duality gap using the following inequalities
\begin{align}
s\aggpar(1-\bar{\Theta})~\Exp[G(\alphav^{(H)})]\leq\Exp[\bD(\vc{\alphav}{H})-\bD(\alphav^{(H+1)})]\leq\Exp[\bD(\vc{\alphav}{H})-\bD(\alphav^{\star})]\leq\epsilon_{\bD},
\end{align}
where the first inequality is due to \eqref{eq:dualdecrease_vs_dualitygap4}, the second inequality is due to the optimality of $\alphav^{\star}$, and the last inequality is the bound we just proved for the dual distance to optimality.

\subsubsection{Asymptotic Convergence}
In the case of smooth loss functions, it is possible to get asymptotic convergence results under milder assumptions. The following corollary is an extension of Theorem \ref{thm:convergenceSmoothCasePart1}.
\begin{corollary}
\label{coro:asym}
If the loss functions $\ell_t$ are $\mu$-smooth, then under Assumption \ref{asm:strong}, $\E [\bD(\vc{\alphav}{H})-\bD(\alphav^{\star}) ]\rightarrow 0$ as $H\rightarrow \infty$ if either of the following conditions hold
\begin{itemize}
\item $\limsup_{h\rightarrow\infty} p_t^h <1$ and $\limsup_{h\rightarrow\infty} \hat{\Theta}_t^h<1$.
\item For any task $t$, $\left(1- p_t^h\right)\times\left(1- \hat{\Theta}_t^h\right)=\omega(\frac{1}{h})$. Note that in this case $\lim_{h\rightarrow\infty} p_t^h$ can be equal to 1.
\end{itemize}
\end{corollary}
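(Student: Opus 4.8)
The plan is to re-run the contraction argument behind Theorem~\ref{thm:convergenceSmoothCasePart1}, but keeping the straggler quantities $p_t^h$ and $\hat{\Theta}_t^h$ iteration-dependent rather than collapsing them into the uniform bounds of Assumption~\ref{asm:prob_stragglers}. Concretely, I would inspect the proof of Lemma~\ref{lem:basic-k}: the factor $(1-\bar{\Theta})$ appearing there arises solely from bounding each task's expected quality $1-\Theta_t^h$ via Lemma~\ref{lem:ThetaBound}. Retaining instead the exact identity $1-\Theta_t^h=(1-p_t^h)(1-\hat{\Theta}_t^h)$ from the computation in the proof of Lemma~\ref{lem:ThetaBound}, and then choosing $s=\mu/(\mu+\sigma_{\max}\sigma')$ exactly as in the proof of Lemma~\ref{lem:oneRoundImprove} so that $J_t\le 0$ for every $t$, one gets, for smooth losses, the heterogeneous one-step estimate
\begin{align*}
\Exp\bigl[\bD(\vc{\alphav}{h})-\bD(\vc{\alphav}{h+1})\,\big|\,\bH_h\bigr]
\ \ge\ \aggpar s \sum_{t=1}^m \bigl(1-\Theta_t^h\bigr)\,G_t(\vc{\alphav}{h})
\ \ge\ \aggpar s \,\bigl(1-\bar{\Theta}_h\bigr)\, G(\vc{\alphav}{h}),
\end{align*}
where $\bar{\Theta}_h\eqdef\max_{t\in[m]}\Theta_t^h\le 1$ and the last step uses $G_t(\vc{\alphav}{h})\ge 0$ together with $\sum_{t=1}^m G_t=G$. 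Combining with weak duality $G(\vc{\alphav}{h})\ge\bD(\vc{\alphav}{h})-\bD(\alphav^{\star})$ yields the per-iteration contraction $\Exp[\bD(\vc{\alphav}{h+1})-\bD(\alphav^{\star})\mid\bH_h]\le(1-\aggpar s(1-\bar{\Theta}_h))(\bD(\vc{\alphav}{h})-\bD(\alphav^{\star}))$, with the factor lying in $[0,1]$ since $\aggpar,s\in(0,1]$.

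Next I would take total expectations, unroll the recursion from $h=0$, use the initial-gap bound $\bD(\vc{\alphav}{0})-\bD(\alphav^{\star})\le n$ (as in the proof of Theorem~\ref{thm:convergenceSmoothCasePart1}), and apply $1-x\le e^{-x}$ to obtain
\begin{align*}
\Exp\bigl[\bD(\vc{\alphav}{H})-\bD(\alphav^{\star})\bigr]\ \le\ n\,\exp\!\Bigl(-\aggpar s\sum_{h=0}^{H-1}\bigl(1-\bar{\Theta}_h\bigr)\Bigr).
\end{align*}
So the claim reduces to showing $\sum_{h=0}^{\infty}(1-\bar{\Theta}_h)=\infty$, where by the identity above $1-\bar{\Theta}_h=\min_{t\in[m]}(1-p_t^h)(1-\hat{\Theta}_t^h)$. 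Under the first condition, $\limsup_h p_t^h<1$ and $\limsup_h\hat{\Theta}_t^h<1$ give, for each of the finitely many $t$, a constant $\delta_t>0$ and an index $h_t$ with $(1-p_t^h)(1-\hat{\Theta}_t^h)\ge\delta_t$ for all $h\ge h_t$; hence $1-\bar{\Theta}_h\ge\min_t\delta_t>0$ for all $h\ge\max_t h_t$, and the series diverges. Under the second condition, $(1-p_t^h)(1-\hat{\Theta}_t^h)=\omega(1/h)$ gives, for each $t$, an index $h_t$ with $(1-p_t^h)(1-\hat{\Theta}_t^h)\ge 1/h$ for $h\ge h_t$; hence $1-\bar{\Theta}_h\ge 1/h$ for $h\ge\max_t h_t$, and divergence follows from the harmonic series. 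In either case the exponent tends to $-\infty$, so $\Exp[\bD(\vc{\alphav}{H})-\bD(\alphav^{\star})]\to 0$.

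The main obstacle I anticipate is the first step: carefully re-deriving Lemma~\ref{lem:basic-k}/Lemma~\ref{lem:oneRoundImprove} with the heterogeneous, time-varying factors $1-\Theta_t^h$ in place of the single constant $1-\bar{\Theta}$, and then checking that pulling a common factor $1-\bar{\Theta}_h$ out of $\sum_t(1-\Theta_t^h)G_t$ is legitimate because each $G_t\ge 0$. Everything downstream is the same geometric-decrease argument as in Theorem~\ref{thm:convergenceSmoothCasePart1}, now with a non-constant rate whose per-step exponents form a divergent series; the finiteness of the number of tasks $m$ is precisely what lets us pass from the per-task asymptotic hypotheses to a uniform (eventual) lower bound on $1-\bar{\Theta}_h$.
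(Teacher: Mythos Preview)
Your proposal is correct and follows essentially the same approach as the paper's own proof: both replace the uniform constant $\bar{\Theta}$ in the contraction inequality of Theorem~\ref{thm:convergenceSmoothCasePart1} by the iteration-dependent $\bar{\Theta}^h\eqdef\max_t\Theta_t^h$, then unroll the recursion and invoke the two hypotheses to force the product of contraction factors to zero. In fact you supply considerably more detail than the paper does---the identity $1-\Theta_t^h=(1-p_t^h)(1-\hat{\Theta}_t^h)$, the use of $G_t\ge 0$ (Fenchel--Young) to pass to the minimum, the exponential bound, and the explicit divergence arguments under each bullet---whereas the paper's proof is essentially a one-line sketch pointing back to Theorem~\ref{thm:convergenceSmoothCasePart1}.
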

\begin{proof}
The proof is similar to the proof of Theorem \ref{thm:convergenceSmoothCasePart1}. We can use the same steps to get a sufficient decrease inequality like the one in \eqref{eq:finaldecrease}, with $\bar{\Theta}$ replaced with $\bar{\Theta}^{h} \eqdef \max_t~\Theta_t^h$.
\begin{align}
\mathbb{E}[\bD(\alphav^{(h+1)})-\bD(\alphav^{\star})|\bH_h]\leq \left(1-s \aggpar (1-\bar\Theta^h)\right) \left(\bD(\alphav^{(h)})-\bD(\alphav^{\star})\right)\nonumber
\end{align}
The rest of the argument follows by applying this inequality recursively and using the assumptions in the corollary.
\end{proof}

\subsection{Convergence Analysis for Lipschitz Losses: Proof for Theorem~\ref{thm:LipschitzLosses}}
\begin{proof}
For $L$-Lipschitz loss functions, the proof follows the same line of reasoning as the proof of Theorem 8 in \cite{Ma:2015ti} and therefore we do not cover it in detail. Unlike the case with smooth losses, it is not possible to bound the decrease in dual objective by \eqref{eq:dualdecrease_vs_dualitygap4}. However, we can use Lemma \ref{lem:basic-k} with $\mu=0$. The next step is to bound $J=\sum_{t=1}^m J_t$ in \eqref{eq:lemma:dualdecrease_vs_dualitygap2}, which can be done via Lemma \ref{lem:sigmabound}. Finally, we apply the inequalities recursively, choose $s$ carefully, and bound the terms in the final inequality. We refer the reader to the proof of Theorem 8 in \cite{Ma:2015ti} for more details. It is worth noting that similar to Theorem \ref{thm:convergenceSmoothCasePart1}, we can similarly get bounds on the expected duality gap, instead of the dual objective.
\end{proof}

\section{Choosing $\sigma'$}
In order to guarantee the convergence of the federated update of \cocoamtl, the parameter $\sigma'$ must satisfy:
\begin{equation}
\label{eq:sigmaPrimeSafeDefinition}  
\sigma' \sum_{t=1}^m \|\Xv_t\alphav_t\|_{\Mv_t}^2 \geq \aggpar\|\Xv\alphav\|_{\Mv}^2 \, \, ~\forall\alphav\in\mathbb{R}^n \,,
\end{equation}

where $\aggpar\in(0,1]$ is the aggregation parameter for \cocoamtl Algorithm. Note that in Algorithm \ref{alg:mocha} we have assumed that $\aggpar=1$. Based on Remark \ref{rmk:vecrep}, it can be seen that the matrix $\Mv$ in Assumption \ref{asm:strong} can be chosen of the form $\Mv = \bar{\Mv}\otimes \Iv_{d\times d}$, where $\bar{\Mv}$ is a positive definite matrix of size $m\times m$. For such a matrix, the following lemma shows how to choose $\sigma'$.

\label{sec:appsigma}
\begin{lemma}
\label{lemma:sigma}
For any positive definite matrix $\Mv = \bar{\Mv}\otimes \Iv_{d\times d}$,
\begin{equation}
\sigma' \eqdef \aggpar\max_{t}\sum_{t'=1}^m \frac{|\bar{\Mv}_{tt'}|}{\bar{\Mv}_{tt}}
\end{equation}
satisfies the inequality \eqref{eq:sigmaPrimeSafeDefinition}.
\end{lemma}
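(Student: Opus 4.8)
The plan is to recast inequality~\eqref{eq:sigmaPrimeSafeDefinition} as the positive semidefiniteness of an $m\times m$ matrix and then to verify that matrix is diagonally dominant, so that the claim reduces to a Gershgorin-type bound.

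First I would exploit the Kronecker structure. Partition $\alphav=(\alphav_1;\dots;\alphav_m)$ with $\alphav_t\in\R^{n_t}$ and set $v_t\eqdef\Xv_t\alphav_t\in\R^d$; since $\Xv=\Diag(\Xv_1,\dots,\Xv_m)$, the vector $\Xv\alphav$ has blocks $v_1,\dots,v_m$, so we write $w\eqdef(v_1;\dots;v_m)\in\R^{md}$. Because $\Mv=\bar\Mv\otimes\Iv_{d\times d}$, its $(t,t')$ block equals $\bar\Mv_{tt'}\Iv_{d\times d}$; in particular its $t$-th diagonal block is $\Mv_t=\bar\Mv_{tt}\Iv_{d\times d}$. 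Hence $\|\Xv_t\alphav_t\|_{\Mv_t}^2=\bar\Mv_{tt}\|v_t\|_2^2$ and $\|\Xv\alphav\|_{\Mv}^2=\sum_{t,t'=1}^m\bar\Mv_{tt'}\langle v_t,v_{t'}\rangle$, and therefore
\begin{equation}
\sigma'\sum_{t=1}^m\|\Xv_t\alphav_t\|_{\Mv_t}^2-\aggpar\|\Xv\alphav\|_{\Mv}^2
= w^\top\!\Big[\big(\sigma'\Diag(\bar\Mv)-\aggpar\bar\Mv\big)\otimes\Iv_{d\times d}\Big]w .\nonumber
\end{equation}
It is enough to prove nonnegativity of the right-hand side for \emph{all} $w\in\R^{md}$ (the vectors $(\Xv_1\alphav_1;\dots;\Xv_m\alphav_m)$ form a subset of $\R^{md}$), and since the eigenvalues of $B\otimes\Iv_{d\times d}$ are exactly those of $B$ (each with multiplicity $d$), this holds iff $B\eqdef\sigma'\Diag(\bar\Mv)-\aggpar\bar\Mv\succeq 0$.

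It then remains to check $B\succeq 0$. As $\bar\Mv$ is positive definite, $\bar\Mv_{tt}>0$ for all $t$, and with $c\eqdef\sigma'/\aggpar=\max_t\sum_{t'=1}^m|\bar\Mv_{tt'}|/\bar\Mv_{tt}$ we have $B=\aggpar\,A$ where $A\eqdef c\,\Diag(\bar\Mv)-\bar\Mv$; since $\aggpar>0$ it suffices to show $A\succeq0$. The matrix $A$ is symmetric, with $A_{tt}=(c-1)\bar\Mv_{tt}$ and $A_{tt'}=-\bar\Mv_{tt'}$ for $t\ne t'$. Retaining only the $t'=t$ term in the definition of $c$ gives $c\ge1$, so the diagonal of $A$ is nonnegative, and for every $t$,
\begin{equation}
A_{tt}-\sum_{t'\ne t}|A_{tt'}|
=(c-1)\bar\Mv_{tt}-\sum_{t'\ne t}|\bar\Mv_{tt'}|
=\bar\Mv_{tt}\Big(c-\sum_{t'=1}^m\tfrac{|\bar\Mv_{tt'}|}{\bar\Mv_{tt}}\Big)\ge0\nonumber
\end{equation}
by the choice of $c$ as the maximum over $t$. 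Thus $A$ is symmetric and (weakly) diagonally dominant with nonnegative diagonal, hence positive semidefinite (by Gershgorin: every eigenvalue lies in some interval $[A_{tt}-R_t,\,A_{tt}+R_t]$ with $R_t=\sum_{t'\ne t}|A_{tt'}|\le A_{tt}$). This yields $B\succeq0$, and the proof is complete.

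I do not anticipate a genuine obstacle: the only places needing care are the block bookkeeping in the first step (identifying $\Mv_t=\bar\Mv_{tt}\Iv_{d\times d}$ and expanding $\|\Xv\alphav\|_{\Mv}^2$) and recognizing that the desired inequality is precisely $B\succeq0$; once that reformulation is made, the diagonal-dominance verification is immediate. It is also worth noting in passing that this argument shows the stated $\sigma'$ is a \emph{safe} (Gershgorin-type) choice rather than necessarily the smallest feasible one, which matches how $\sigma'$ is used in Lemma~\ref{lem:RelationOfDTOSubproblems}.
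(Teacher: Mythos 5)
Your proof is correct. It reaches the same destination as the paper's but by a different route: the paper proves \eqref{eq:sigmaPrimeSafeDefinition} with a direct chain of inequalities, bounding each cross term $\bar{\Mv}_{tt'}\langle \Xv_t\alphav_t,\Xv_{t'}\alphav_{t'}\rangle$ by $\tfrac{1}{2}|\bar{\Mv}_{tt'}|\bigl(\|\Xv_t\alphav_t\|_2^2+\|\Xv_{t'}\alphav_{t'}\|_2^2\bigr)$ via Cauchy--Schwarz and AM--GM, then summing and taking the maximum over $t$ — which is, in effect, the elementary proof that a weakly diagonally dominant symmetric matrix has a nonnegative quadratic form, carried out inline. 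You instead factor the problem through the Kronecker structure to reduce it to positive semidefiniteness of the $m\times m$ matrix $\sigma'\Diag(\bar\Mv)-\aggpar\bar\Mv$ and then invoke Gershgorin. The two arguments are mathematically equivalent in content, but yours makes the structure more transparent: it shows the inequality is a statement about $\bar\Mv$ alone, independent of $d$ and of the data $\Xv$ (indeed you prove the stronger claim for all $w\in\R^{md}$, not just those in the range of $\Xv$), and it makes explicit that $\sigma'$ is a safe Gershgorin-type bound rather than the smallest feasible one. The paper's version is more self-contained (no appeal to Gershgorin) and, because the per-row sums $\sum_{t'}|\bar\Mv_{tt'}|/\bar\Mv_{tt}$ appear explicitly before the final maximization, it hands over the per-task refinement $\sigma'_t=\aggpar\sum_{t'}|\bar\Mv_{tt'}|/\bar\Mv_{tt}$ noted in the subsequent remark for free; your argument yields the same refinement if one stops at the diagonal-dominance computation rather than passing to the uniform constant $c$. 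Both proofs are complete and correct.
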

\begin{proof}
First of all it is worth noting that for any $t$, $\Mv_t = \bar{\Mv}_t\otimes \Iv_{d\times d}$. For any $\alphav\in\mathbb{R}^n$
\begin{align}
\aggpar\|\Xv\alphav\|_{\Mv}^2 &= \gamma\sum_{t,t'} \bar{\Mv}_{tt'}\langle \Xv_t\alphav_t,\Xv_{t'}\alphav_{t'}\rangle\nonumber\\
&\leq \aggpar\sum_{t,t'} \frac{1}{2}|\bar{\Mv}_{tt'}|\left(\frac{1}{\bar{\Mv}_{tt}}\|\Xv_t\alphav_t\|_{\Mv_t}^2 + \frac{1}{\bar{\Mv}_{t't'}}\|\Xv_{t'}\alphav_{t'}\|_{\Mv_{t'}}^2\right)\nonumber\\
&= \aggpar\sum_t \left(\sum_{t'}\frac{|\bar{\Mv}_{tt'}|}{\bar{\Mv}_{tt}}\right)\|\Xv_t\alphav_t\|_{\Mv_t}^2\nonumber\\
&\leq \sigma' \sum_{t} \|\Xv_t\alphav_t\|_{\Mv_t}^2,\nonumber
\end{align}
where the first inequality is due to Cauchy-Schwartz and the second inequality is due to definition of $\sigma'$.
\end{proof}
\begin{remark}
Based on the proof of Lemma \ref{lemma:sigma}, it is easy to see that we can choose $\sigma'$ differently across the tasks in our algorithm to allow tasks that are more loosely correlated with other tasks to update more aggressively. To be more specific, if we choose $\sigma'_t = \aggpar\sum_{t'}\frac{|\bar{\Mv}_{tt'}|}{\bar{\Mv}_{tt}}$, then it it is possible to show that $\aggpar\|\Xv\alphav\|_{\Mv}^2\leq \sum_{t=1}^{m}\sigma'_t\|\Xv_t\alphav_t\|_{\Mv_t}^2$ for any $\alphav$, and the rest of the convergence proofs will follow.
\end{remark}
\subsection{The Role of Aggregation Parameter $\aggpar$}
\label{app:aggparRole}
The following remark highlights the role of aggregation parameter $\aggpar$.
\begin{remark}
Note that the when $\aggpar<1$ the chosen $\sigma'$ in \eqref{eq:sigmaPrimeSafeDefinition} would be smaller compared to the case where $\aggpar=1$. This means that the local subproblems would be solved with less restrictive regularizer. Therefore, the resulting $\Delta\alphav$ would be more aggressive. As a result, we need to do a more conservative update $\alphav+\aggpar\Delta\alphav$ in order to guarantee the convergence.
\end{remark}
Although aggregation parameter $\aggpar$ is proposed to capture this trade off between aggressive subproblems and conservative updates, in most practical scenarios $\aggpar=1$ has the best empirical performance.

\section{Simulation Details}
\label{sec:expdetails}

In this section, we provide additional details and results of our empirical study.

\subsection{Datasets}

In Table~\ref{table:data}, we provide details on the number of tasks ($m$), feature size ($d$), and per-task data size ($n_t$) for each federated dataset described in Section~\ref{sec:experiments}. The standard deviation $n_{\sigma}$ is a measure data skew, and calculates the deviation in the sizes of training data points for each task, $n_t$. All datasets are publicly available.
\begin{table}[h]
\centering
\small
\caption{Federated Datasets for Empirical Study.}
\vspace{1em}
\begin{tabular}{c c c c c c}
\toprule
{\textbf{Dataset}} & \textbf{Tasks ($m$)} & \textbf{Features ($d$)} & \textbf{Min $n_t$} & \textbf{Max $n_t$} & \textbf{Std. Deviation $n_{\sigma}$}  \\ \toprule
Human Activity & 30 & 561 & 210 & 306 & 26.75 \\ \midrule
Google Glass & 38 & 180 & 524 &  581 & 11.07 \\ \midrule

Vehicle Sensor & 23 & 100 & 872 & 1,933 & 267.47 \\
\bottomrule
\end{tabular}
\label{table:data}
\end{table}

\subsection{Multi-Task Learning with Highly Skewed Data}

\begin{table}[h]
\centering
\small
\caption{Skewed Datasets for Empirical Study.}
\vspace{1em}
\begin{tabular}{c c c c c c}
\toprule
{\textbf{Dataset}} & \textbf{Tasks ($m$)} & \textbf{Features ($d$)} & \textbf{Min $n_t$} & \textbf{Max $n_t$} & \textbf{Std. Deviation $\sigma$}  \\ \toprule
HA-Skew & 30 & 561 & 3 & 306 & 84.41 \\ \midrule
GG-Skew & 38 & 180 & 6 & 581 & 111.79 \\ \midrule

VS-Skew & 23 & 100 & 19  & 1,933 & 486.08 \\
\bottomrule
\end{tabular}
\label{table:dataskew}
\end{table}

To generate highly skewed data, we sample from the original training datasets so that the task dataset sizes differ by at least two orders of magnitude. The sizes of these highly skewed datasets are shown in Table~\ref{table:dataskew}. When looking at the performance of local, global, and multi-task models for these datasets (Table~\ref{table:skew}), the global model performs slightly better in this setting (particularly for the Human Activity dataset). However, multi-task learning still significantly outperforms all models.

\begin{table}[h]
\centering
\caption{Average prediction error for skewed data: means and standard errors over 10 random shuffles.}
\vspace{1em}
\begin{tabular}{c c c c  c}
\toprule
{\textbf{Model}} & HA-Skew & GG-Skew  & VS-Skew  \\ \toprule
Global & 2.41 (0.30) & 5.38 (0.26) &  13.58 (0.23) \\
\midrule
Local & 3.87 (0.37) & 4.96 (0.20)  &  8.15 (0.19)  \\
\midrule
MTL & \textbf{1.93 (0.44)} & \textbf{3.28 (0.15)}  & \textbf{6.91 (0.21)}  \\
\bottomrule
\end{tabular}
\label{table:skew}
\end{table}

\subsection{Implementation Details}

In this section, we provide details on our experimental setup and compared methods.

\paragraph{Methods.}
\begin{itemize}[leftmargin=*]
\item\textbf{Mb-SGD.} Mini-batch stochastic gradient descent is a standard, widely used method for parallel and distributed optimization. See, e.g., a discussion of this method for the SVM models of interest~\cite{ShalevShwartz:2007p2580}. We tune both the mini-batch size and step size for best performance using
grid search.
\item\textbf{Mb-SDCA.} Mini-batch SDCA aims to improve mini-batch SGD by employing coordinate ascent in the dual,
which has encouraging theoretical and practical backings~\cite{ShalevShwartz:2013wl,Takac:2013ut}. For all experiments, we scale the updates for mini-batch stochastic dual coordinate ascent at each round by $\frac{\beta}{b}$ for mini-batch size $b$ and $\beta \in [1, b]$, and tune both parameters with grid search.
\item\textbf{\cocoa.} We generalize \cocoa~\cite{Jaggi:2014cd, Ma:2015ti} to solve~\eqref{eq:obj}, and tune $\theta$, the fixed approximation parameter, between $[0,1)$ via grid search. For both \cocoa, and \cocoamtl, we use coordinate ascent as a local solver for the dual subproblems~\eqref{eq:subproblem}.
\item\textbf{\cocoamtl.} The only parameter necessary to tune for \cocoamtl is the level of approximation quality $\theta_t^h$, which can be directly tuned via $H_i$, the number of local iterations of the iterative method run locally. In Section~\ref{sec:convergence}, our theory relates this parameter to global convergence, and we discuss the practical effects of this parameter in Section~\ref{ssec:syschal}.
\end{itemize}

\paragraph{Computation and Communication Complexities.} We provide a brief summary of the above methods from the point of view of computation, communication, and memory complexities. \cocoamtl is superior in terms of its computation complexity compared to other distributed optimization methods, as \cocoamtl allows for flexibility in its update of W. At one extreme, the update can be based on a single data point per iteration in parallel, similar to parallel SGD. At the other extreme, \cocoamtl can completely solve the subproblems on each machine, similar to methods such as ADMM. This flexibility of computation yields direct benefits in terms of communication complexity, as performing additional local computation will result in fewer communication steps. Note that all methods, including \cocoamtl, communicate the same size vector at each iteration, and so the main difference is in how many communication rounds are necessary for convergence. In terms of memory, \cocoamtl must maintain the task matrix, $\Omega$, on the master server. While this overhead is greater than most \textit{non-MTL} (global or local) approaches, the task matrix is typically low-rank by design and the overhead is thus manageable. We discuss methods for computing $\Omega$ in further detail in Section~\ref{sec:appendixomega}.

\paragraph{Estimated Time.}
To estimate the time to run methods in the federated setting, we carefully count the floating-point operations (FLOPs) performed in each local iteration for each method, as well as the size and frequency of communication. We convert these counts to estimated time (in milliseconds), using known clock rate and bandwidth/latency numbers for mobile phones in 3G, LTE, and wireless networks~\cite{van2009multi, Huang2013an, singelee2011communication, Carroll2010aa, Miettinen2010ee}. In particular, we use the following standard model for the cost of one round, $h$, of local computation / communication on a node $t$:
\begin{equation}
\textit{Time}(h,t) := \frac{\textit{FLOPs}(h,t)}{\textit{Clock Rate}(t)} + \textit{Comm}(h, t)
\end{equation}
Note that the communication cost $\textit{Comm}(h,t)$ includes both bandwidth and latency measures. Detailed models of this type have been used to closely match the performance of real-world systems~\cite{qipaleo}.

\paragraph{Statistical Heterogeneity.} To account for statistical heterogeneity, \cocoamtl and the mini-batch methods (Mb-SGD and Mb-SDCA) can adjust the number of local iterations or batch size, respectively, to account for difficult local problems or high data skew. However, because \cocoa uses a fixed accuracy parameter $\theta$ across both the tasks and rounds, changes in the subproblem difficulty and data skew can make the computation on some nodes much slower than on others. For \cocoa, we compute $\theta$ via the duality gap, and carefully tune this parameter between $[0,1)$ for best performance. Despite this, the number of local iterations needed for $\theta$ varies significantly across nodes, and as the method runs, the iterations tend to increase as the subproblems become more difficult. 

\paragraph{Systems Heterogeneity.} Beyond statistical heterogeneity, there can be variability in the systems themselves that cause changes in performance. For example, low battery levels, poor network connections, or low memory may reduce the ability a solver has on a local node to compute updates.
As discussed in Section~\ref{ssec:syschal}, \cocoamtl assumes that the central node sets some global clock cycle, and the $t$-th worker determines the amount of feasible local computation given this clock cycle along with its systems constraints. This specified amount of local computation corresponds to some implicit value of  $\theta_t^h$ based on the underlying systems and statistical challenges for the $t$-th node.

To model this setup in our simulations, it suffices to fix a global clock sycle and then randomly assign various amounts of local computation to each local node at each iteration. Specifically, in our simulations we charge all nodes the same fixed computation cost at each iteration over an LTE network, but force some nodes to perform less updates given their current systems constraints. At each round, we assign the number of updates for node $t$ between $[0.1n_{\min}, n_{\min}]$ for \textit{high variability} environments, and between $[0.9n_{\min}, n_{\min}]$ for \textit{low variability} environments, where $n_{\min} := \min_t n_t$ is the minimum number of local data points across tasks.  

For the mini-batch methods, we vary the mini-batch size in a similar fashion. However, we do not follow this same process for \cocoa, as this would require making the $\theta$ parameter worse than what was optimally tuned given statistical heterogeneity. Hence, in these simulations we do not introduce any additional variability (and thus present overly optimistic results for \cocoa).  In spite of this, we see that in both low and high variability settings, \cocoamtl significantly outperforms all other methods and is robust to systems-related heterogeneity.

\paragraph{Fault Tolerance.} Finally, we demonstrate that \cocoamtl can handle nodes periodically dropping out, which is also supported in our convergence results in Section~\ref{sec:convergence}. We perform this simulation using the notation defined in Assumption~\ref{asm:prob_stragglers}, i.e., that each node $t$ temporarily drops on iteration $h$ with probability $p_t^h$. In our simulations, we modify this probability directly and show that \cocoamtl is robust to fault tolerance in Figure~\ref{fig:ft}. However, note that this robustness is not merely due to statistical redundancy: If we are to drop out a node entirely (as shown in the green dotted line), \cocoamtl will not converge to the correct solution. This provides insight into our Assumption~\ref{asm:prob_stragglers}, which requires that the probability that a node drops at each round cannot be exactly equal to one.

\end{document}